\DeclareMathOperator*{\argmax}{arg\,max}
\DeclareMathOperator*{\argmin}{arg\,min}
\DeclareMathOperator{\sign}{sign}
\newcommand*\diff{\mathop{}\!\mathrm{d}}
\newlength\savewidth\newcommand\shline{\noalign{\global\savewidth\arrayrulewidth
  \global\arrayrulewidth 1pt}\hline\noalign{\global\arrayrulewidth\savewidth}}
\theoremstyle{plain}
\newtheorem{theorem}{Theorem}[section]
\theoremstyle{definition}
\theoremstyle{remark}
\definecolor{mydarkblue}{rgb}{0,0.08,0.45}
\title{IDQL: Implicit Q-Learning as an Actor-Critic Method with Diffusion Policies}
\author{
    Philippe Hansen-Estruch ~
    Ilya Kostrikov ~
    Michael Janner \\ ~ 
    \textbf{Jakub Grudzien Kuba} ~
    \textbf{Sergey Levine} \\
    UC Berkeley\\
    \texttt{\{hansenpmeche, kostrikov, janner, kuba\}@berkeley.edu} \\ \texttt{svlevine@eecs.berkeley.edu}
}
\begin{document}
\maketitle
\begin{abstract}
Effective offline RL methods require properly handling out-of-distribution actions. Implicit Q-learning (IQL) addresses this by training a Q-function using only dataset actions through a modified Bellman backup. However, it is unclear which policy actually attains the values represented by this implicitly trained Q-function. In this paper, we reinterpret IQL as an actor-critic method by generalizing the critic objective and connecting it to a behavior-regularized implicit actor. This generalization shows how the induced actor balances reward maximization and divergence from the behavior policy, with the specific loss choice determining the nature of this tradeoff. Notably, this actor can exhibit complex and multimodal characteristics, suggesting issues with the conditional Gaussian actor fit with advantage weighted regression (AWR) used in prior methods. Instead, we propose using samples from a diffusion parameterized behavior policy and weights computed from the critic to then importance sampled our intended policy. We introduce Implicit Diffusion Q-learning (IDQL), combining our general IQL critic with the policy extraction method. IDQL maintains the ease of implementation of IQL while outperforming prior offline RL methods and demonstrating robustness to hyperparameters. Code is available at \href{https://github.com/philippe-eecs/IDQL}{\texttt{github.com/philippe-eecs/IDQL}}.
\end{abstract}

\section{Introduction}

Offline RL holds promise in enabling policies to be learned from static datasets. Value function estimation provides a basic building block for many modern offline RL methods, but such methods need to handle the out-of-distribution actions that arise when evaluating the learned policy
. While a variety of methods based
on constraints and regularization have been proposed to address this, in this work we will study implicit Q-learning (IQL) \citep{kostrikov2021offline}, which avoids querying the value for unseen actions entirely. In IQL the Q-function is trained with Bellman backups using the state value function as the target value. The state value function is trained via expectile regression onto the Q-values implicitly performing a maximization over actions. While IQL provides an appealing alternative to other offline RL methods, it remains unclear what policy the learned value function corresponds to, and in turn, makes it difficult to understand the bottlenecks in IQL-style algorithms.

In this paper, we derive a variant of IQL that significantly improves over the performance of the original approach, is relatively insensitive to hyperparameters, and outperforms even more recent offline RL methods. Our key observation is based on a new perspective that reinterprets IQL as an actor-critic method. This is achieved through the generalization of the value optimization problem in IQL to use an arbitrary convex loss, which we then link to an implicit behavior-regularized actor. This generalization is significant, as it demonstrates how different choices of critic objectives give rise to distinct implicit actor distributions that diverge from the behavior policy to varying extents. Consequently, examining the characteristics of this implicit actor reveals not only the trade-offs made by IQL in critic learning, but also potential avenues for improving the method. 

\begin{figure*}
    \centering
    \includegraphics[height=3.3cm]{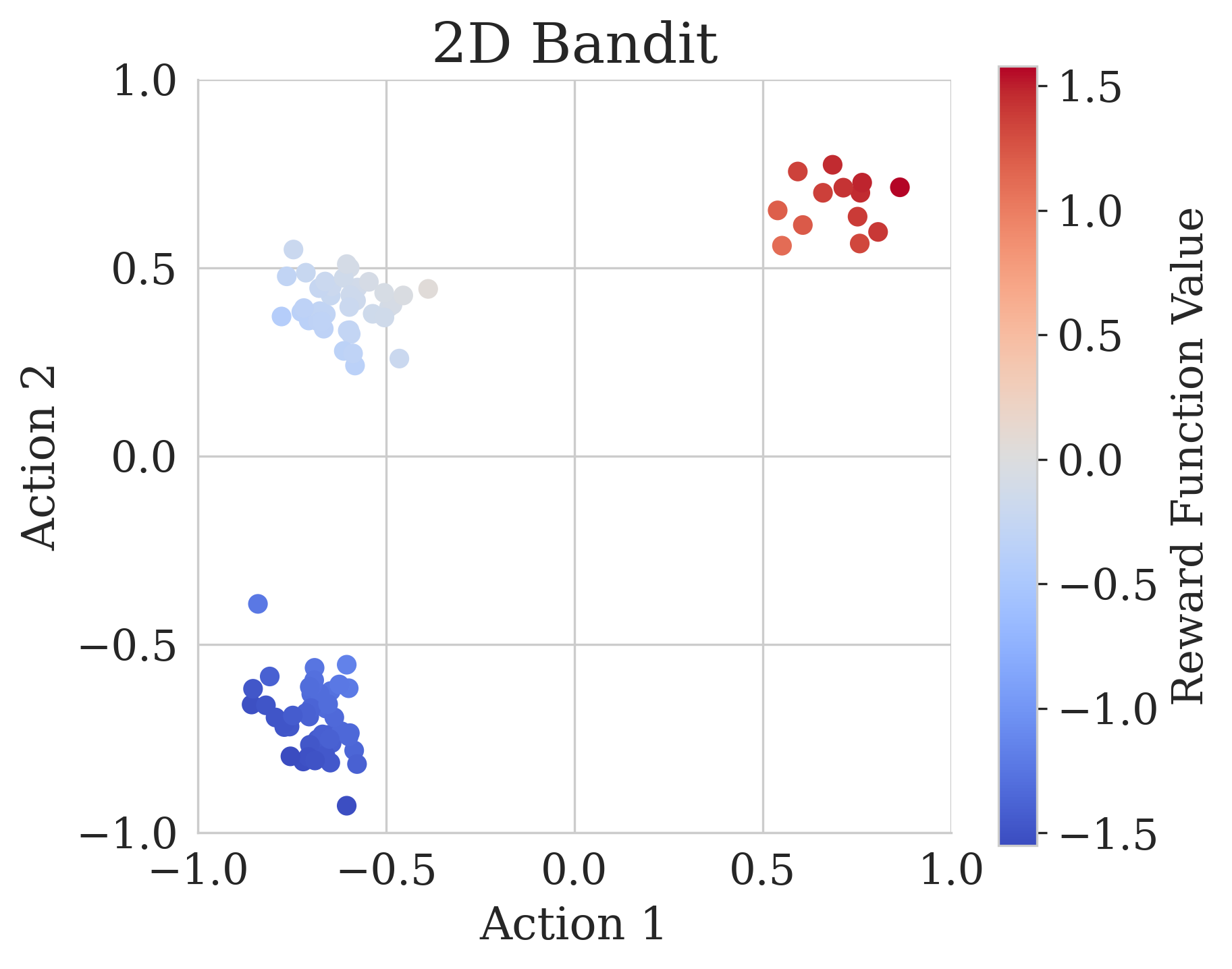} \includegraphics[height=3.3cm]{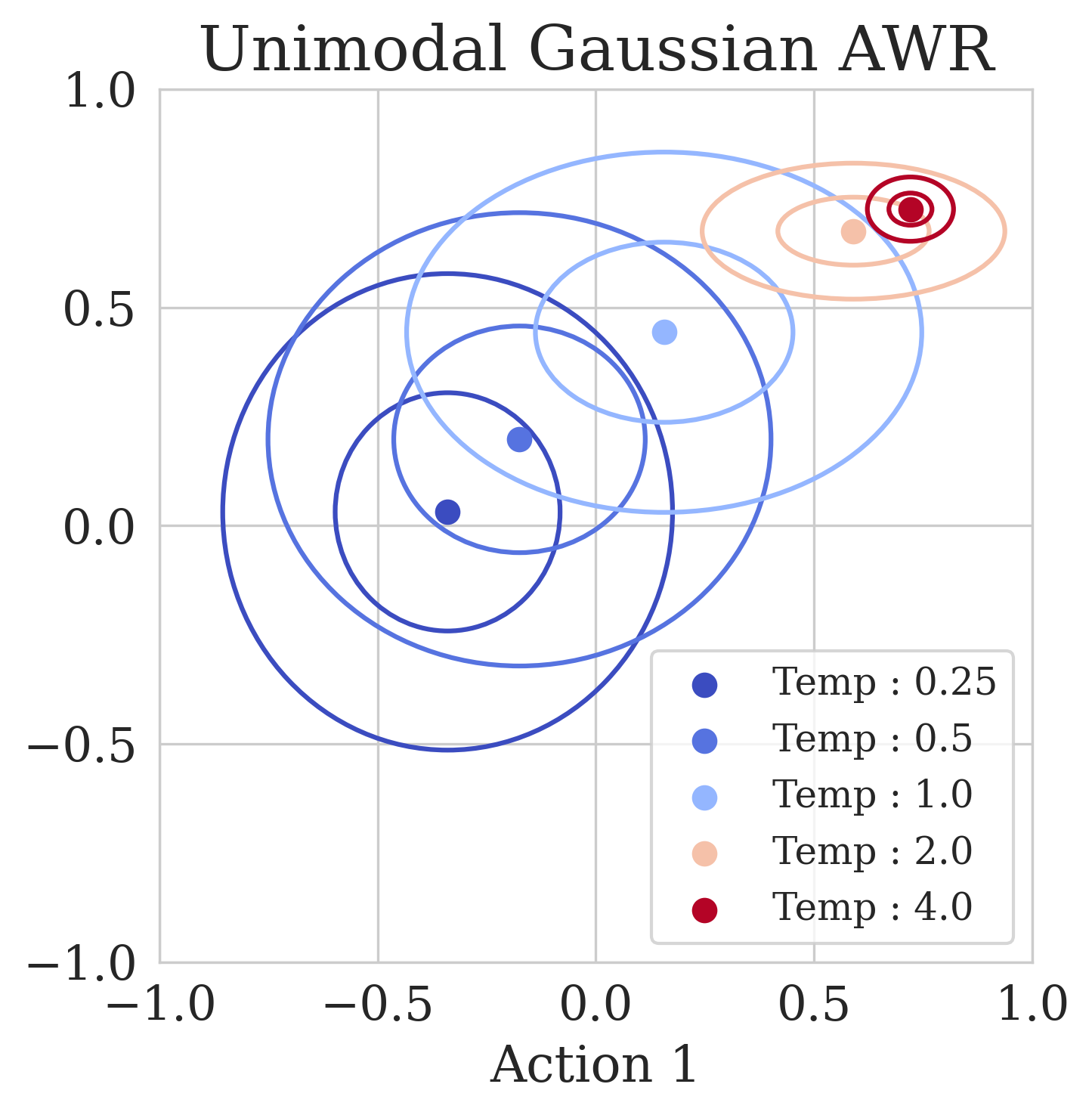}
    \includegraphics[height=3.3cm]{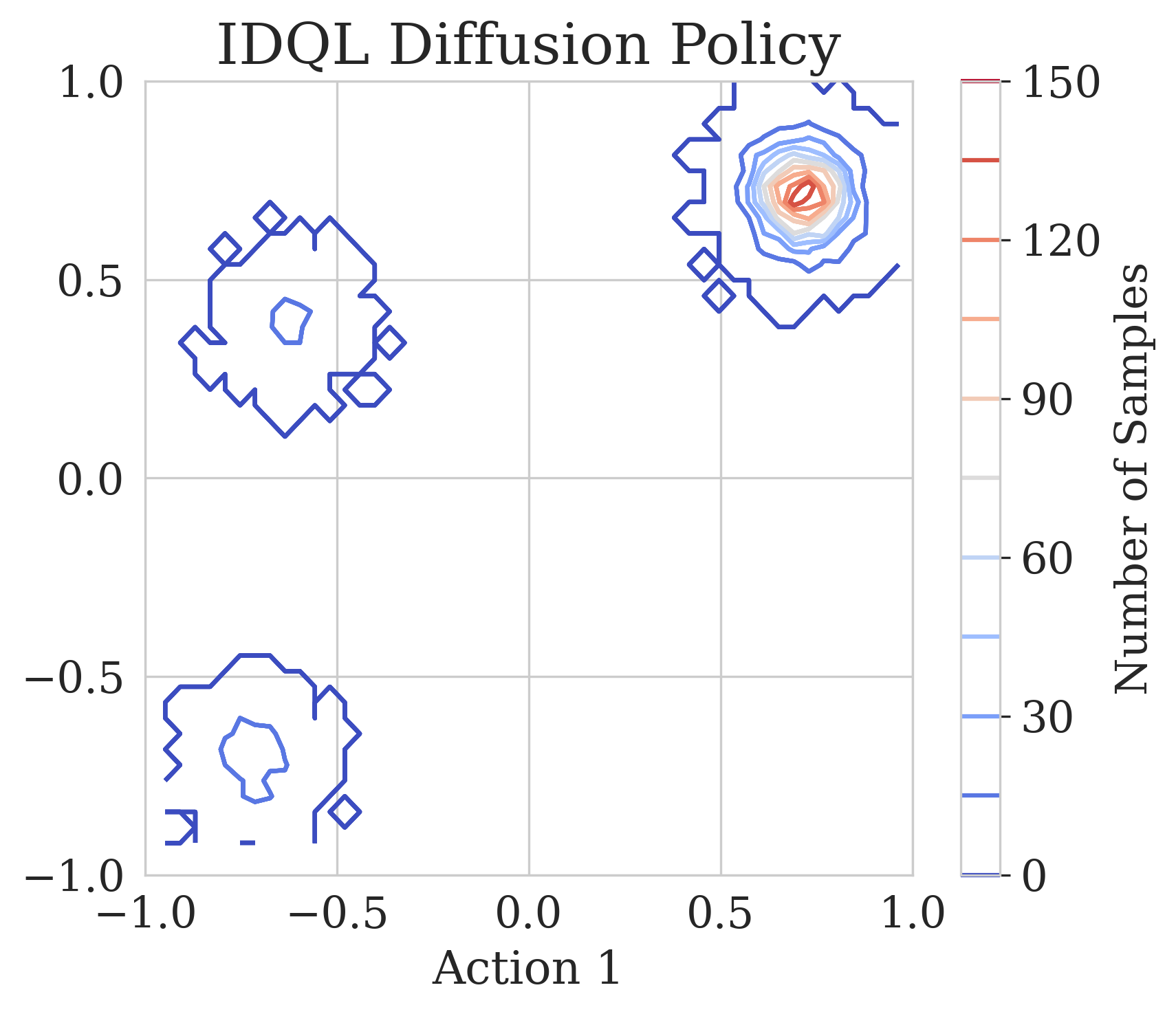}
    \caption{\textbf{Comparing unimodal AWR as used by IQL to our IDQL diffusion policy.} (Left) 2D bandit with three actions and reward function $r(a_1, a_2) = a_1 + a_2$. For the 90th expectile of the reward distribution, we compute contour plots for the corresponding unimodal Gaussian policy fitted with AWR (as used by IQL) for various inverse temperatures (center) as well as samples from our proposed extraction method (right). The unimodal actor incorrectly approximates the implicit policy for all temperatures, while the diffusion extraction method captures the implicit policy accurately. }
    \label{figure:teaser}
\end{figure*}

Our analysis reveals that the implicit actor is complex and potentially multimodal which suggests an issue with the policy extraction scheme used with IQL~\citep{kostrikov2021offline}, where a unimodal Gaussian actor is trained with advantage weighted regression (AWR)~\citep{peters2007, peng2019advantage, nair2020awac}. Such a parameterization is unlikely to accurately approximate the implicit actor (Figure~\ref{figure:teaser}). In contrast to conventional actor-critic algorithms, where the critic adapts to the \textit{explicit} actor parameterization, IQL's strength lies in its fully decoupled critic training from the explicit actor, thereby avoiding instability and hyperparameter sensitivity. However, this requires that the policy extraction approximates the implicit actor accurately. 
One helpful observation to address this challenge is that the implicit actor induced through critic learning can be expressed as a reweighting of the behavior policy, which can be approximated via importance weights. Thus, for policy extraction, we propose reweighting samples from a diffusion-parameterized behavior model with critic-computed weights, which can then be re-sampled to express both the implicit and greedy policy. Diffusion models \citep{sohl2015deep,ho2020denoising, song2020score} are a crucial detail for enabling our policy extraction method to represent multimodal distributions accurately.

We introduce Implicit Diffusion Q-learning (IDQL). Our main contributions are (1) the generalization of IQL to an actor-critic method where the choice of critic loss induces an implicit actor distribution that can be expressed as a reweighting of the behavior policy; (2) a diffusion based policy extraction algorithm that combines samples from an expressive diffusion model with a reweighting scheme for recovering an accurate approximation to the implicit actor, or maximizing the critic value. IDQL outperforms prior methods on the D4RL offline RL benchmarks~\citep{fu2020d4rl}, including methods based on IQL and other methods that use diffusion models. Furthermore, IDQL is very portable, requiring limited tuning of hyperparameters across domains (e.g., antmaze, locomotion) to perform well for both offline RL and online finetuning with offline pretraining.
\vspace{-0.3cm}
\section{Related Work}
\label{relatedwork}
While offline RL has been approached with a wide range of algorithmic frameworks~\citep{lange2012batch,levine2020offline}, recent methods often use value-based algorithms derived from Q-learning and off-policy actor-critic methods. To adapt approaches to the offline setting, it is necessary to avoid overestimated values out-of-distribution actions~\citep{kumar2019stabilizing}. To this end, recent methods use implicit divergence constraints \citep{peters2007, peng2019advantage, nair2020awac, wang2020critic}, explicit density models \citep{wu2019behavior, fujimoto2019off, kumar2019stabilizing, ghasemipour2021emaq}, and supervised learning terms \citep{fujimoto2021minimalist}. Some works also directly penalize out-of-distribution action Q-values \citep{kostrikovfisher2021offline, kumar2020conservative}. Recently, \emph{implicit} TD backups have been developed that avoid the use of out-of-sample actions in Q-function by using asymmetric loss functions with SARSA-style on-policy backups. This approach was proposed in IQL~\citep{kostrikov2021offline}, which trains a Q-function with an expectile loss and then extracts a policy with AWR~\citep{peng2019advantage}. 
However, the extraction procedure itself imposes a KL-divergence constraint, which is not consistent with the policy the critic captures. Extreme Q-learning (EQL)~\citep{garg2023extreme} modifies the Q-function objective in IQL to estimate a soft value consistent with the entropy regularized AWR policy. Concurrently, \citet{xu2023offline} provides another generalization of IQL in behavior-regularized MDPs, arriving at two new implicit Q-learning objectives. Our generalized IQL derivation is related, but features a different generalization form. 
Contrary to these papers, which focus on Q-learning, our primary practical contributions are in the policy extraction step motivated by this generalization. 

Our method leverages expressive generative models to capture the policy. To that end, we review works that use expressive generative models for offline RL. EMaQ \citep{ghasemipour2021emaq} defines a policy using an auto-regressive behavioral cloning model,
using the argmax action from this policy for backups in critic learning. The Decision Transformer and Trajectory Transformer \citep{chen2021decision, janner2021offline} clone the entire trajectory space using transformers and apply reward conditioning or beam search, respectively, to bias sampled trajectories towards higher rewards. Diffusion models have also been used in behavioral cloning and offline RL.
\citet{florence2021implicit} and  \citet{pearce2023imitating} use
energy-based models and diffusion models, respectively, for behavioral cloning. \citet{janner2022planning} and \citet{ajay2022conditional} use diffusion to directly model and sample the trajectory space; samples are guided with gradient guidance or reward conditioning. \citet{reuss2023goal} uses diffusion policies for goal-conditioned imitation learning. 

Closest to our work are prior methods that represent the actor with a diffusion model in offline RL. Diffusion Q-learning (DQL) \citep{wang2022diffusion}
incorporates diffusion to parameterize the actor in a TD3+BC-style algorithm \citep{fujimoto2021minimalist}. Select from Behavior Candidates (SfBC)~\citep{chen2022offline} uses importance reweighting from a diffusion behavior model to define the policy and then trains a critic with value iteration using samples from the policy. Action-Restricted Q-learning (ARQ)~\citep{goo2022know} performs a critic update with a diffusion behavior model and then uses AWR to extract the policy. Our method differs from these methods as the diffusion behavior model is kept separate from the critic learning process, only interacting with the critic at evaluation time. This makes it comparatively efficient to tune hyperparameters
. Furthermore, to our knowledge, we are the first method to apply diffusion to online finetuning.
\vspace{-0.2cm}
\section{Preliminaries}
RL is formulated in the context of a Markov decision process (MDP), which is defined as a tuple $( \mathcal{S}, \mathcal{A}, p_0(s), p_M(s' |s, a), r(s,a), \gamma )$ with state space $\mathcal{S}$, action space $\mathcal{A}$, initial state distribution $p_0(s)$, transition dynamics $p_M(s' | s, a)$, reward function $r(s, a)$, and discount $\gamma$. The goal is to recover a policy $\pi(a|s)$ that maximizes the discounted sum of rewards or return in the MDP. In offline RL, the agent only has access to a fixed dataset of transitions $\mathcal{D} = \{s, a, r, s' \}$ collected from a behavior policy distribution $\mu(a|s)$.

\paragraph{Implicit Q-learning.} Instead of constraining the policy or regularizing the critic, \citet{kostrikov2021offline}
proposed to approximate the expectile $\tau$ over the distribution of actions. For a parameterized critic $Q_\theta(s,a)$, target critic $Q_{\hat \theta}(s,a)$, and value network $V_{\psi}(s)$ the value objective is 
\begin{align}
    \begin{split}
        \label{eqn:fit_v_expectiles}
        \mathcal{L}_V(\psi) &= \mathbb{E}_{(s,a)~\sim \mathcal{D}}[L_2^\tau(Q_{\hat{\theta}}(s,a) - V_\psi(s))] \\ \mbox{ where } L_2^\tau(u) &= |\tau-\mathbbm{1}(u < 0)|u^2.
    \end{split}
\end{align}
Expectile regression uses a simple asymmetric squared error and requires only sampling actions from the datasets without any explicit policy. 
Then, this value function is used to update the Q-function:
\begin{align}
    \label{eqn:fit_q}
    \mathcal{L}_Q(\theta) &= \mathbb{E}_{(s,a,s')~\sim \mathcal{D}}[(r(s,a) + \gamma V_\psi(s') - Q_{\theta}(s,a))^2].
\end{align}

The Q-function is induced under the implicit policy distribution defined by the expectile.
For policy extraction, IQL uses AWR \citep{peters2007, peng2019advantage, nair2020awac}, which trains the policy via weighted regression by minimizing
 \begin{align}
 \label{AWR}
  \mathcal{L}_\pi(\phi) &= \mathbb{E}_{(s,a)\sim \mathcal{D}}[\exp(\alpha(Q_{\hat \theta}(s,a) - V_{\psi}(s)))\log \pi_{\phi}(a|s)].
\end{align}
The temperature parameter, $\alpha \in [0, \infty]$, serves to balance critic exploitation with behavior cloning.
\paragraph{Diffusion models.} 
We briefly review diffusion for behavior cloning as it is the basis of our policy extraction algorithm. Diffusion models~\citep{sohl2015deep, ho2020denoising, song2020score} are latent variable models that use a Markovian noising and denoising process that can be used to model a parameterized behavior distribution $\mu_\phi(a_0|s) = \int \mu_{\phi}(a_{0:T}|s) \diff a_{1:T}$ for the latent variables $a_1, \cdots, a_T$.
The forward noising process follows a fixed variance schedule  $\beta_1, \cdots, \beta_T$ that follows the distribution 
$$q(a_t | a_{t-1}) = \mathcal{N}(\sqrt{1 - \beta_t} a_{t-1}, \beta_t I).$$
Following DDPMs \citep{ho2020denoising}, our practical implementation involves parameterizing the score network directly $\mu_{\phi}(a_{t-1}|a_t, s, t)$ to recover the behavior cloning objective
\begin{equation}
\label{BCDiffusionLoss}
\mathcal{L}_{\mu}(\phi) = \mathbb{E}_{t \sim \mathcal{U}(1, T), \epsilon \sim \mathcal{N}(0, I), s,a \sim \mathcal{D}} [||\epsilon -  \mu_{\phi}(\sqrt{\hat \alpha_t}a + \sqrt{ 1 - \hat \alpha_t}\epsilon , s, t)||].
\end{equation}
 To sample from $\mu_{\phi}(a_0|s)$, we use Langevin sampling or reverse diffusion where $a_T \sim \mathcal{N}(0, I)$ and $\epsilon \sim \mathcal{N}(0, I)$ gets resampled every step
 \begin{equation}
 \label{Langevin_Dynamics}
     a_{t-1} \leftarrow \frac{1}{\sqrt{\alpha_t}}\bigg(a_t - \frac{\beta_t}{\sqrt{1 - \hat \alpha_t}} \mu_{\phi}(a_t |s, t)\bigg) + \sqrt{\beta_t} \epsilon, \ \text{for} \ t = \{T, \cdots, 1\}
 \end{equation}
 
\section{Implicit Q-Learning as an Actor-Critic Method}
\label{acmethod}

In this section, we will present a generalization of IQL that will not only provide a more complete conceptual understanding of implicit Q-learning, but also help us to understand how this method could be improved. \citet{kostrikov2021offline} show that IQL recovers Q-learning in the limit as $\tau$ approaches 1 in Equation~\ref{eqn:fit_v_expectiles}, but this does not describe what policy is captured by the Q-function in practice, when $0.5 < \tau < 1$. We can better understand the real behavior of IQL by reinterpreting it as an actor-critic method, where critic learning induces an implicit behavioral regularized actor $\pi_{\text{imp}}(a|s)$. This generalization will help us to understand how IQL can be improved, and the tradeoff that is captured by IQL's hyperparameters and the form of its loss function.

\subsection{Generalized Implicit Q-Learning}

To rederive IQL as an actor-critic method, we first generalize the value loss in Equation~\ref{eqn:fit_v_expectiles} to use an arbitrary convex loss $f$ on the difference $Q(s,a) - V(s)$. For a given $Q(s,a)$, the general IQL critic update can be defined as
\begin{equation}
\label{eqn:general_IQL}
    V^*(s) = \argmin_{V(s)} \mathbb{E}_{a \sim \mu(a|s)}[f(Q(s,a) - V(s))] = \argmin_{V(s)} \mathcal{L}_{V}^f(V(s)).
\end{equation}
We recover IQL when $f(u) = L^\tau_2(u)$, as in Equation~\ref{eqn:fit_v_expectiles}, but we will also consider other asymmetric convex losses. To define the implicit actor, we follow the conventional definition of value functions in actor-critic methods, where
\begin{equation}
\label{eqn:implicit_actor_definition}
    V(s) = \mathbb{E}_{a \sim \pi_{\text{imp}}(a|s)}[Q(s,a)].
\end{equation}
We use $f' = \frac{\partial f}{\partial V(s)}$ as shorthand for the derivative of $f$ with respect to $V(s)$.
\begin{theorem}
\label{thm:Implicit_Policy}For every state $s$ and convex loss function $f$ where $f'(0) = 0$, 
the solution to the optimization problem defined in Equation~\ref{eqn:general_IQL} is also a solution to the optimization problem
$$
\argmin_{V(s)} \mathbb{E}_{a \sim \pi_{\text{imp}}(a|s)}[(Q(s,a) - V(s))^2],
$$
where $\pi_{\text{imp}}(a|s) \propto \frac{\mu(a|s) |f'(Q(s,a) - V^*(s))|}{|Q(s,a) - V^*(s)|}$.
\end{theorem}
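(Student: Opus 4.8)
The plan is to characterize each of the two optimization problems by its first-order optimality condition and show that these conditions coincide at $V^*(s)$. Both objectives are convex in $V(s)$ --- the generalized IQL loss $\mathcal{L}_V^f(V(s)) = \mathbb{E}_{a\sim\mu(a|s)}[f(Q(s,a)-V(s))]$ because $f$ is convex, and the $\pi_{\text{imp}}$-weighted squared error because it is a nonnegative quadratic in $V(s)$ --- so in each case stationarity is necessary and sufficient for optimality, and it is enough to match the two stationarity equations.

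First I would differentiate $\mathcal{L}_V^f$ with respect to $V(s)$, pushing the derivative through the state-conditional expectation; using the notation $f' = \partial f/\partial V(s)$ from the excerpt, this yields the optimality condition $\mathbb{E}_{a\sim\mu(a|s)}[\,f'(Q(s,a)-V^*(s))\,] = 0$. The key structural step is then to exploit convexity of $f$ together with $f'(0)=0$: the derivative of a convex function is monotone, so $f'(Q(s,a)-V^*(s))$ and $Q(s,a)-V^*(s)$ have opposite signs whenever the residual is nonzero, which lets me write $f'(Q(s,a)-V^*(s)) = -\,\sign(Q(s,a)-V^*(s))\,|f'(Q(s,a)-V^*(s))|$. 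Multiplying and dividing by $|Q(s,a)-V^*(s)|$ and substituting into the optimality condition gives
\[
\int \mu(a|s)\,\frac{|f'(Q(s,a)-V^*(s))|}{|Q(s,a)-V^*(s)|}\,\big(Q(s,a)-V^*(s)\big)\,\diff a = 0 .
\]
Now I would identify the integrand: the factor $\mu(a|s)\,|f'(Q(s,a)-V^*(s))| / |Q(s,a)-V^*(s)|$ is exactly the unnormalized density of $\pi_{\text{imp}}(a|s)$ from the statement, so writing $Z(s)>0$ for its normalizing constant, the display above reads $Z(s)\,\mathbb{E}_{a\sim\pi_{\text{imp}}(a|s)}[Q(s,a)-V^*(s)] = 0$, i.e. $V^*(s) = \mathbb{E}_{a\sim\pi_{\text{imp}}(a|s)}[Q(s,a)]$. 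Finally, by the bias--variance decomposition $\mathbb{E}_{a\sim\pi_{\text{imp}}(a|s)}[Q(s,a)]$ is precisely the unique minimizer of $V(s)\mapsto\mathbb{E}_{a\sim\pi_{\text{imp}}(a|s)}[(Q(s,a)-V(s))^2]$, which is the problem in the theorem; this also recovers the actor--critic identity of Equation~\ref{eqn:implicit_actor_definition}.

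The main obstacle I anticipate is handling actions with $Q(s,a)=V^*(s)$, where the weight $|f'(u)|/|u|$ is a $0/0$ form. I would address this by noting that the product $\frac{|f'(Q(s,a)-V^*(s))|}{|Q(s,a)-V^*(s)|}\big(Q(s,a)-V^*(s)\big) = -f'(Q(s,a)-V^*(s))$ is well defined everywhere, so the integral identity never actually requires the division, and by extending the weight itself by continuity on this set --- for twice-differentiable $f$, such as the expectile loss $L_2^\tau$, $|f'(u)|/|u|\to |f''(0)|$ as $u\to 0$. One also needs the mild assumptions that these weights are $\mu(\cdot|s)$-integrable and not $\mu(\cdot|s)$-almost-everywhere zero, so that $Z(s)\in(0,\infty)$ and $\pi_{\text{imp}}(a|s)$ is a genuine distribution. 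A secondary, minor point is smoothness of $f$: for nondifferentiable convex losses the argument carries over with subgradients in place of $f'$, but for the losses actually considered in the paper $f$ is continuously differentiable, so this is a non-issue.
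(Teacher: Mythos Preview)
Your proposal is correct and follows essentially the same route as the paper's proof: set the first-order condition of the $f$-loss to zero, use convexity together with $f'(0)=0$ to rewrite $f'(u)$ as $\tfrac{|f'(u)|}{|u|}\cdot u$ (up to sign), absorb that ratio into $\pi_{\text{imp}}$, and identify the resulting equation $\mathbb{E}_{\pi_{\text{imp}}}[Q-V^*]=0$ as the stationarity condition of the squared-error problem. The only differences are that you add care around the $0/0$ set and integrability (which the paper glosses over), and that under the standard reading of $f'$ as $df/du$---which is how the paper's proof actually uses it---convexity with $f'(0)=0$ makes $f'(u)$ share the \emph{same} sign as $u$ rather than the opposite, though this does not affect the argument since absolute values are taken throughout.
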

\begin{proof} See Appendix~\ref{proofs}. \end{proof}

Theorem \ref{thm:Implicit_Policy} provides us with a relationship between the (generalized) IQL loss function $f$ and the corresponding implicit actor $\pi_{\text{imp}}(a|s)$, thus indicating that IQL is an actor-critic method. To make it clear how the implicit actor relates to the behavior policy $\mu(a|s)$, we can define an importance weight
\begin{equation}
\label{eqn:importance_weight}
    w(s,a) = \frac{|f'(Q(s,a) - V^*(s))|}{|Q(s,a) - V^*(s)|},
\end{equation}
which yields an expression for the implicit actor as $\pi_{\text{imp}}(a|s) \propto \mu(a|s) w(s,a)$. 
The form of $f$ affects how $\pi_{\text{imp}}(a|s)$ deviates from $\mu(a|s)$.
Since IQL can recover the value function $V^*(s)$ without constructing the policy explicitly, the implicit actor only needs to be extracted at the end of critic training. 
Though, the complexity of the weight in Equation~\ref{eqn:importance_weight} presents a challenge, as policy extraction using a less expressive policy class is likely to result in a poor approximation of the implicit actor. We will return to this point in the next section, but first, we will provide three examples of potential functions $f$ and derive their corresponding implicit policies.

\paragraph{Expectiles.}
The expectile of a distribution corresponds to the conditional mean if points above the expectile are sampled more frequently than in the standard distribution. 
For a given $\tau$, the expectile objective corresponds to $f(u) = L_2^\tau(u)$ (from Equation~\ref{eqn:fit_v_expectiles}) and solution $V_\tau^2(s)$. From Theorem~\ref{thm:Implicit_Policy},
\begin{equation}
\label{eq:Expectile_Weights}
    w_2^{\tau}(s, a) = |\tau - \mathbbm{1}(Q(s,a) < V_\tau^2(s))|.
\end{equation}
Increasing $\tau$ for expectiles directly increases the deviation from the behavior policy.

\paragraph{Quantiles.}
The quantile statistic measures the top $\tau\%$ of the behavior policy performance. This is analogous to the mean behavior performance (SARSA) and Q-learning trade-off that occurs with expectiles, but instead balances median behavior performance with Q-learning. For a given $\tau$, the quantile objective corresponds to $f(u) = |\tau-\mathbbm{1}(u < 0)||u|$ and solution $V_\tau^1(s)$.
From Theorem~\ref{thm:Implicit_Policy},
\begin{equation}
\label{eq:Quantile_Weights}
    w_1^{\tau}(s, a) = \frac{|\tau - \mathbbm{1}(Q(s,a) < V_1^\tau(s))|}{|Q(s,a) - V_\tau^1(s)|}.
\end{equation}
As with expectiles, increasing $\tau$ directly influences the level of extrapolation from the behavior policy. Points should cluster around the quantile for this implicit policy.

\paragraph{Exponential.}
Another interesting choice for $f$ is the linex function $f(u) = \alpha \exp(u) - \alpha u$, where the temperature $\alpha \in [0, \infty]$ serves to balance matching the behavior policy and optimizing the critic. The solution for this objective, derived in Appendix~\ref{additionalderivations}, is 
\begin{equation}
\label{eqn:V_exp}
V_{\exp}(s) = \frac{1}{\alpha} \log \sum_a \exp(\alpha Q(s,a) + \log \mu(a|s)).
\end{equation}
The derivation of $V_{\exp}(s)$ indicates that it is a normalizer for the AWR policy (Equation~\ref{AWR}), where $\pi_{\text{awr}}(a|s) \propto \exp(\alpha Q(s,a) + \log \mu(a|s))$. Effectively, this loss applies a KL-divergence constraint. This objective matches the critic objectives used by \citet{garg2023extreme} and \citet{xu2023offline}, and it is also similar to the soft policy definition used by \citet{haarnoja2018soft}. From Theorem~\ref{thm:Implicit_Policy},
\begin{equation}
\label{eq:Exponential_Weights}
    w_{\exp}(s, a) =  \frac{\alpha|\exp\big(\alpha(Q(s,a) - V_{\exp}(s))\big) - 1|}{|Q(s,a) - V_{\exp}(s)|}.
\end{equation}
This gives most of the weight to the actions with the highest Q-value. If IQL is used with the AWR policy extraction, then this choice of $f$ would provide the correct corresponding critic, though we show later that in practice this loss does not lead to the best performance.

\paragraph{Comparing different loss functions.}
\begin{figure}
    
    \centering
    \includegraphics[height=3.0cm]{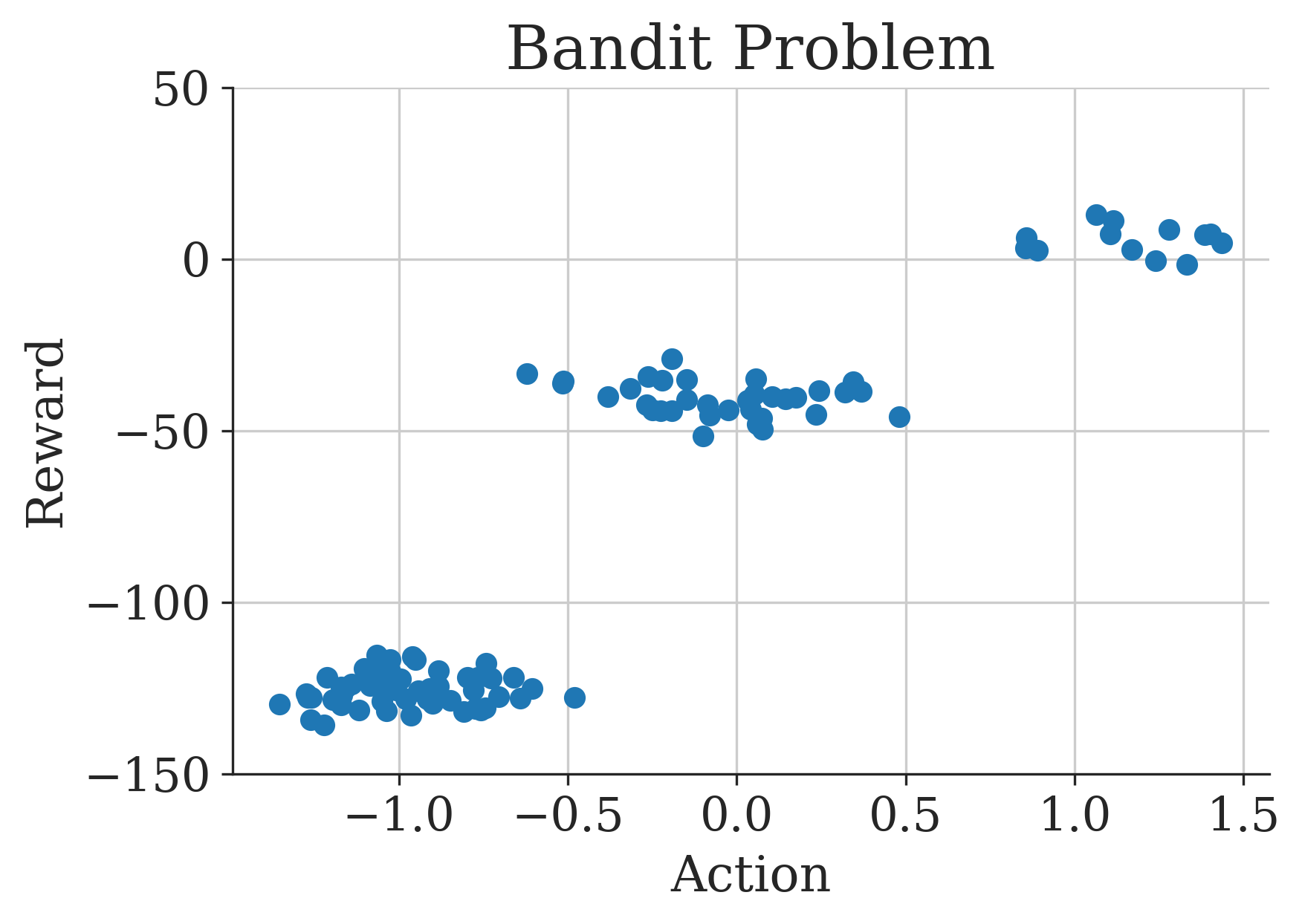} \includegraphics[height=3.0cm]{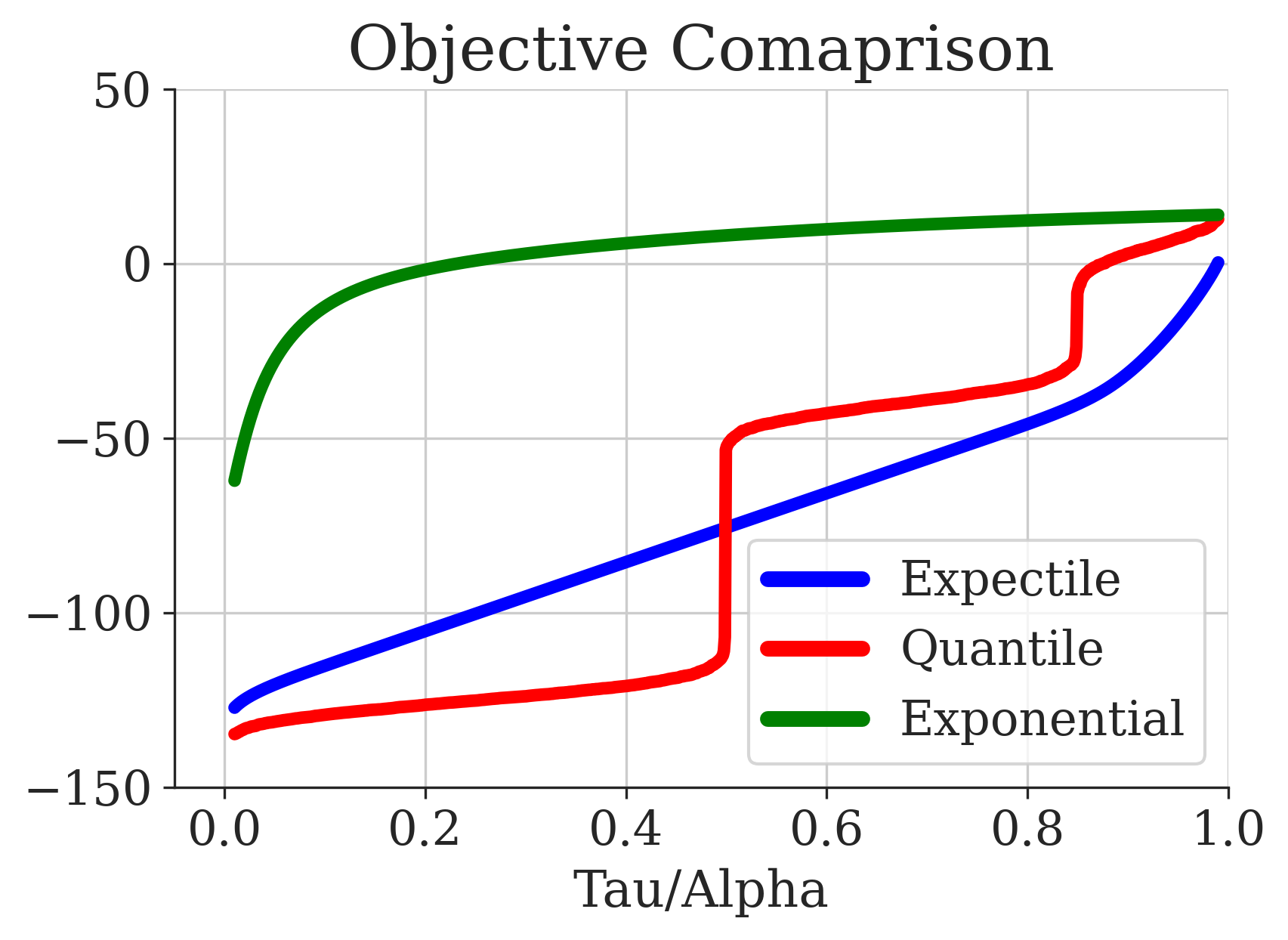} \includegraphics[height=3.0cm]{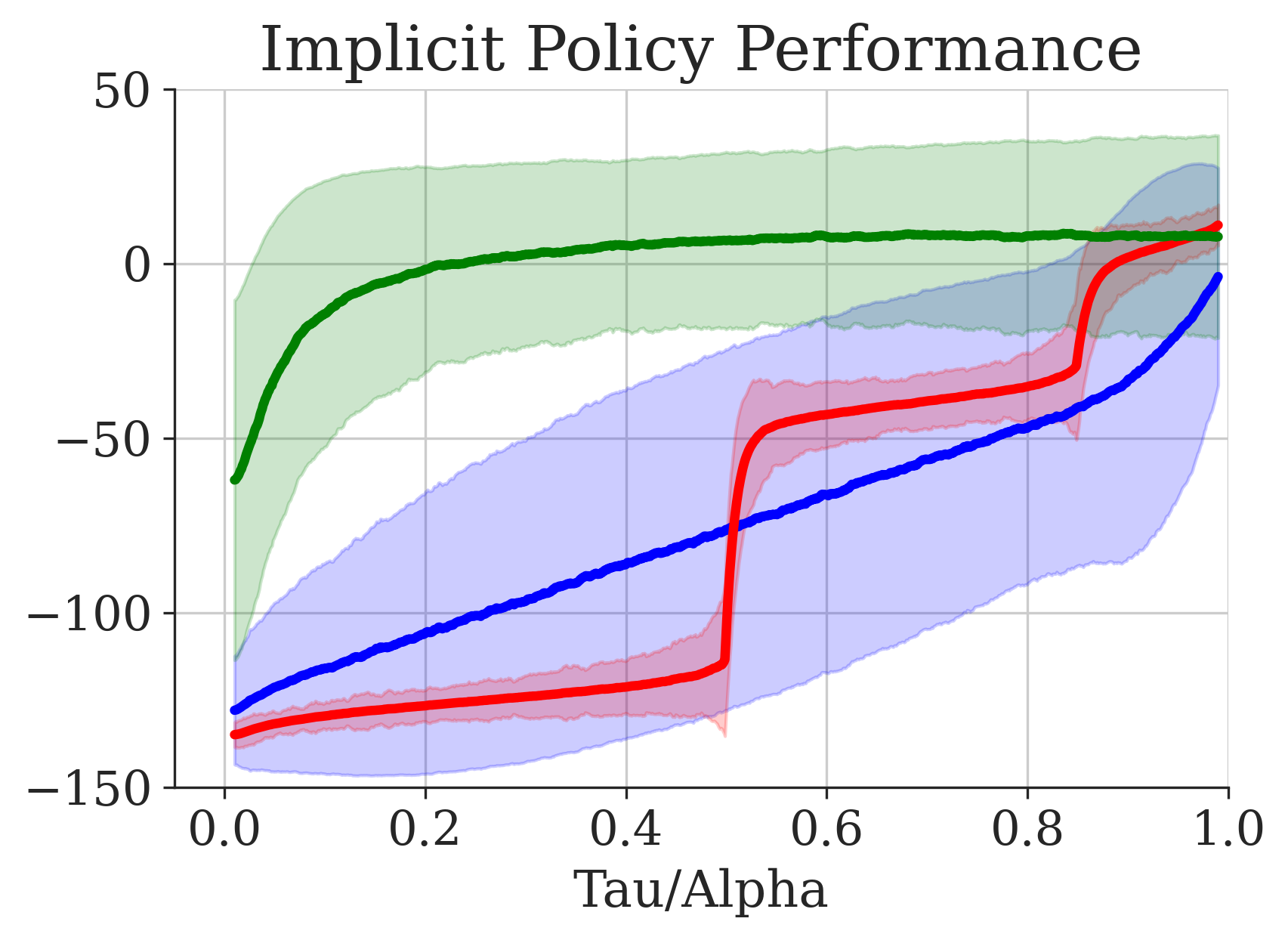}
    \caption{\textbf{Comparison of different loss functions for generalized IQL on a simple bandit.} The bandit's continuous action space has three clusters corresponding to low, medium, and high reward, with additive noise (left). We compute values from the different objectives in Section~\ref{acmethod} over choices of hyperparameter on the bandit distribution (center). Finally, we show the mean and standard deviation performance of samples from the implicit actor (from Theorem~\ref{thm:Implicit_Policy}) induced by each objective (right). Each objective captures different characteristics from the distribution of rewards.}
    \label{figure:banditproblem}
\end{figure}
Theorem~\ref{thm:Implicit_Policy} demonstrates that IQL can be generalized to an actor-critic method with the choice of loss function $f$ influencing the implicit policy. In each case, there is a hyperparameter that controls how much the implicit actor deviates from the behavior policy. In a simple bandit problem (Figure~\ref{figure:banditproblem}), the expectile increases smoothly with $\tau$, the quantile corresponds to the cumulative distribution function, and the exponential aligns with the maximum. The implicit policy distributions also differ for each objective: the expectile actor covers a broad range of outcomes, the quantile actor is tightly focused around the mean, and the exponential actor has some coverage around the maximum. Although it seems that the exponential critic learns the most optimal policy, in practice we find that it can be unstable, as we will show in Section~\ref{objectiveablation}.

\subsection{Policy Extraction and General Algorithm}

\label{policyextraction}
\begin{figure}[htb]
    \centering
    \begin{minipage}[t]{0.45\textwidth}
        \begin{algorithm}[H]
        \caption{General IQL Training}
        \label{alg:GeneralCriticLearning}
        \begin{algorithmic}
        \State \textbf{Hyperparameters:} LR $\lambda$, EMA $\eta$
\State \textbf{Initialize:} $\theta$, $\hat\theta$, $\psi$, and $\phi$
\\
\While {training not converged}
\State $\psi \leftarrow \psi - \lambda \nabla_{\psi} \mathcal{L}_V^f(\psi)$ (Equation~\ref{eqn:general_IQL})
\State $\theta \leftarrow \theta - \lambda \nabla_{\theta} \mathcal{L}_Q(\theta)$ (Equation~\ref{eqn:fit_q})
\State $\hat \theta \leftarrow (1 - \eta) \hat \theta + \eta \theta$
\State $\phi \leftarrow \phi - \lambda \nabla_{\phi} \mathcal{L}_{\mu}(\phi)$ (Equation~\ref{BCDiffusionLoss}) 
\EndWhile
\vspace{0.3855cm}
        \end{algorithmic}
        \end{algorithm}
    \end{minipage}
    \hfill
    \begin{minipage}[t]{0.54\textwidth}
        \begin{algorithm}[H]
        \caption{General IQL Policy Extraction}
        \label{alg:GeneralPolicyExtraction}
        \begin{algorithmic}
        \State \textbf{Hyperparameters:} Samples per state $N$
\State \textbf{Pretraining:} $Q_{\hat\theta}(s,a)$, $V_{\psi}(s)$, and $\mu_{\phi}(a|s)$
\\
\While {not done with episode}
\State Observe current state $s$ 
\State Sample $a_i \sim \mu_{\phi}(a|s)$, $i = 1, \ldots, N$
\State Compute $w(s,a_i)$ using Eqns.~\ref{eq:Expectile_Weights}, \ref{eq:Quantile_Weights}, \ref{eq:Exponential_Weights}, or \ref{eqn:argmax}
\State Normalize:  $p_i = \frac{w(s, a_i)}{\sum_j w(s, a_j)}$
\State Select $a_{\text{taken}}$ as a categorical from $p_i$
\EndWhile
        \end{algorithmic}
        \end{algorithm}
    \end{minipage}
\end{figure}
Theorem~\ref{thm:Implicit_Policy} shows that the implicit actors corresponding to critics trained with IQL can be complex and multimodal. However, standard IQL approximates these complex implicit actors with a unimodal conditional Gaussian policy~\citep{kostrikov2021offline}, and since the critic is unaware of this approximation, it does not adapt to it (in contrast to a standard actor-critic method where the critic adjusts to the limitations of the actor). While decoupling the actor from the critic should make the method less sensitive to hyperparameters, we hypothesize that in practice this benefit can only be realized if the final explicit actor is powerful enough to capture the complex implicit actor distribution.

One approach to train a more expressive actor would be to use a more powerful conditional generative model (e.g., diffusion model, normalizing flow) with an AWR-style importance weighted objective. However, it is known in the literature that using highly expressive models with importance weighted objectives can be problematic, as such models can increase the likelihood of all training points regardless of their weight~\citep{byrd2019effect, xu2021understanding}. We find using AWR in the DDPM objective to not help performance (Appendix~\ref{additionalablations}). In order to capture these policies without requiring extensive tuning, we train a highly expressive policy to represent the behavior policy and then reweight the samples from this model.

Denoting our learned behavior policy model as $\mu_{\phi}(a|s)$, we can generate samples from this model, and then use the critic to reweight these actions, ultimately forming the intended policy when resampled. This approach is summarized in Algorithm~\ref{alg:GeneralPolicyExtraction}, and provides samples from the correct implicit actor distribution. In practice, we also found that simply taking the action with the highest Q-value tends to yield better performance at evaluation time, which corresponds to selecting $w(s,a)$ to be a one-hot. This approach is analogous to how stochastic actor methods typically use a stochastic actor for critic learning and a deterministic actor at evaluation time, ~\citep{brandfonbrener2021offline, haarnoja2018soft}. This yields the deterministic policy
\begin{equation}
\label{eqn:argmax}
\pi(s) = \argmax_{{ a_i \sim \mu(a|s), \ i=1\ldots N}}Q(s,a_i).
\end{equation}

\paragraph{General algorithm summary.}
For choice of loss $f$ and generative model parameterization, our complete algorithm is summarized in the training phase (Algorithm~\ref{alg:GeneralCriticLearning}) and inference phase (Algorithm~\ref{alg:GeneralPolicyExtraction}). Theoretically, any parameterization of generative model could be used for behavior cloning in our method. However, we find that diffusion models yield the best results for our practical approach, as we discuss in the subsequent section.

\paragraph{Online finetuning procedure.}
\label{practicalfinetuning}

Since general IQL extracts policies only during evaluation, finetuning the behavioral distribution may not be necessary for situations where exploration is not a significant requirement. We outline two methods for fine-tuning using general IQL: (1) freezing the behavior policy $\mu_{\phi}(a|s)$, sampling the argmax action for exploration, and only finetuning $Q_{\theta}(s,a)$ and $V_{\psi}(s)$ and, (2) sampling $\pi_{\text{imp}}(a|s)$ for exploration, and finetuning $Q_{\theta}(s,a)$, $V_{\psi}(s)$, and $\mu_{\phi}(a|s)$.

\section{Implicit Diffusion Q-Learning}
\label{section:issuesdiff}
As argued in Section~\ref{policyextraction}, the policy extraction algorithm requires an expressive behavior distribution to model the implicit actor accurately. Diffusion models are a good fit here as they have been used to model complex distributions in images~\citep{ho2020denoising} as well as in continuous action spaces~\citep{wang2022diffusion, pearce2023imitating} prior. Therefore, for the practical implementation of our general IQL algorithm depicted in Algorithm~\ref{alg:GeneralCriticLearning} and Algorithm~\ref{alg:GeneralPolicyExtraction}, we use a diffusion model parameterization for $\mu_{\phi}(a|s)$ and the DDPM objective (Equation~\ref{BCDiffusionLoss}). We dub this method Implicit Diffusion Q-learning (IDQL).

\paragraph{Issues in continuous space expression with diffusion.}

A naïve DDPM implementation on continuous spaces can have issues with outputting outliers and expressing the distribution accurately. As an example, we test a simple implementation of DDPMs on 2D continuous datasets. In Figure~\ref{fig:2dsamples}, the simple MLP architecture fails to capture the data distributions and produces many out-of-distribution samples. The outliers are particularly problematic because they might be out-of-distribution for a trained Q-function and as a result, might receive erroneously high Q values.

    \begin{SCfigure}
        \centering
        \includegraphics[width=0.6\textwidth]{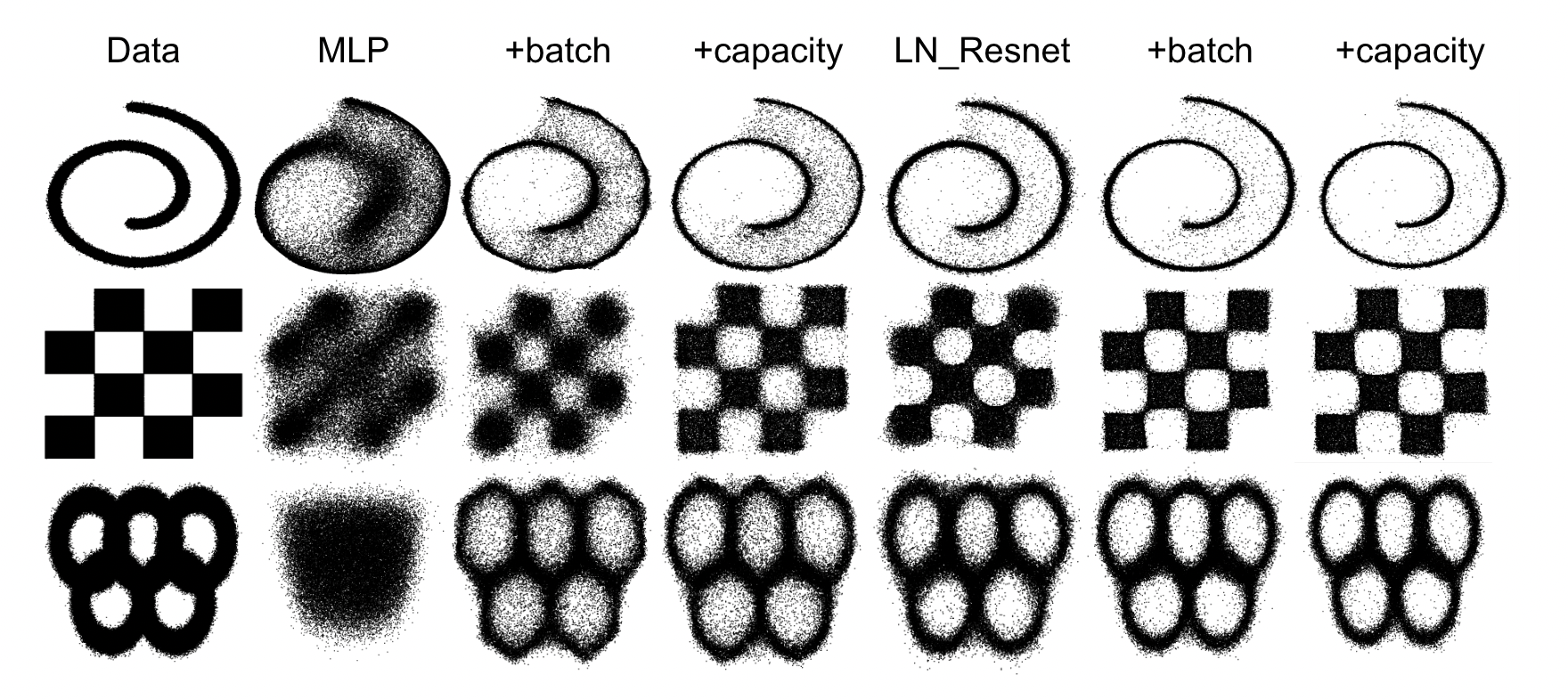}
        \caption{Samples from configurations of DDPMs for toy 2D continuous datasets. MLP has batch size 256 and two hidden layers size 256. +Batch increases the batch size to 4096 and +capacity uses batch size 4096 and 3 hidden layers. LN\_Resnet follows the same pattern, but uses 2 hidden blocks originally and then 3 hidden blocks for +capacity.}
        \label{fig:2dsamples}
    \end{SCfigure}

As demonstrated with diffusion on images~\citep{nichol2021improved}, we find that increasing the batch size and capacity of the MLP network fits the distribution better, but many outliers remain. Inspired by architectures that work well for transformers \citep{radford2019language}, we find that a more ideal network for diffusion should have high capacity while being well-regularized. We use a residual network \citep{he2016deep} with layer normalization \citep{ba2016layer} (LN\_resnet) as our score network parameterization (depicted in Appendix~\ref{archdepict}). Figure~\ref{fig:2dsamples} shows our architecture choice producing higher quality samples with fewer outliers compared to a standard MLP architecture. We demonstrate in Section~\ref{designchoices} that this architecture choice is crucial for strong performance and reduced sensitivity to $N$. This section provides contributions on how batch size, capacity, and architecture choices affect the modeling of continuous spaces with diffusion. Though, similar architectures for action modeling with diffusion have been proposed before~\citep{chen2022offline, pearce2023imitating, reuss2023goal}.
\vspace{-0.3cm}
\section{Experimental Evaluation}
\vspace{-0.2cm}
In order to assess the performance, scalability, and robustness of our approach, we compare it to prior works using two protocols: (1) a direct comparison on D4RL benchmarks~\citep{fu2020d4rl} with reported numbers, and (2) a ``one hyperparameter evaluation'' where we rerun top-performing methods with one hyperparameter setting tuned per domain. We also analyze our method's performance over the critic objectives mentioned in Section~\ref{acmethod} and in online finetuning. Additional experiment details and results can be found in Appendix~\ref{experimentdetails} and Appendix~\ref{additionalablations} respectively.

\subsection{Offline RL Results}
\label{mainresults}

A major appeal of offline RL methods is their ability to produce effective policies without any online interaction. The less tuning that is required for a given method, the easier it will be to use in the real world. Therefore, in this section, we evaluate select methods both with any number of hyperparameter allowed (or reported results) and in a regime where only one hyperparameter can be tuned per domain. We select Conservative Q-learning (CQL) \citep{kumar2020conservative}, Implicit Q-learning (IQL) \citep{kostrikov2021offline}, and Diffusion Q-learning (DQL) \citep{wang2022diffusion} to focus on because of their strong performance in the standard offline RL setting. We refer "-A" as reported results that allow any amount of tuning and "-1" as results that only allow one hyperparameter to be tuned between domains (for IDQL, $\tau = 0.7$ for all locomotion tasks and $\tau = 0.9$ for all antmaze tasks). Results are in Table~\ref{table:d4rl}. We also include comparisons to \%BC, Decision Transformers (DT) \citep{chen2021decision}, TD3+BC (TD3) \citep{fujimoto2021minimalist}, Extreme Q-learning (EQL) \citep{garg2023extreme}, and Selecting from Behavior Candidates (SfBC) \citep{chen2022offline} in Table~\ref{table:totalfulld4rl} (full table in Appendix~\ref{additionalablations}). 

\begin{table*}
\small
\begin{tabular*}{\textwidth}{@{\extracolsep{\fill}}lrrr|r|rrr|r}
\multicolumn{1}{c}{\bf Dataset} & \multicolumn{1}{c}{\bf CQL-A} & \multicolumn{1}{c}{\bf IQL-A} & \multicolumn{1}{c}{\bf DQL-A} & \multicolumn{1}{c}{\bf IDQL-A} & \multicolumn{1}{c}{\bf CQL-1} & \multicolumn{1}{c}{\bf IQL-1} & \multicolumn{1}{c}{\bf DQL-1}  & \multicolumn{1}{c}{\bf IDQL-1} \\
\shline
halfcheetah-med &44.0  & 47.4 & \textbf{51.1}  & \textbf{51.0}  &46.4  &47.6 & \textbf{50.6} & \textbf{49.7} \\
hopper-med &58.5  &66.3  & \textbf{90.5}  & 65.4 & 64.4 &63.7 & \textbf{75.2} & 63.1 \\
walker2d-med &72.5 &78.3  & \textbf{87.0}  & \textbf{82.5} & \textbf{81.6} &\textbf{81.9} & \textbf{83.4} & \textbf{80.2}\\
halfcheetah-med-rep & \textbf{45.5}  & 44.2  & \textbf{47.8}  & \textbf{45.9}  &\textbf{45.4} &43.1 & \textbf{45.8} & \textbf{45.1} \\
hopper-med-rep   &\textbf{95.0}  &\textbf{94.7} & \textbf{101.3} & \textbf{92.1}  & 88.5 & 42.5 &\textbf{94.5} & 82.4  \\
walker2d-med-rep  &77.2  &73.9  & \textbf{95.5}  & 85.1  &74.5 &\textbf{78.4} & \textbf{86.7} & \textbf{79.8}\\
halfcheetah-med-exp &\textbf{91.6}  &86.7  &\textbf{96.8}  & \textbf{95.9}  &64.6 &88.1 & \textbf{93.3} & \textbf{94.4}\\
hopper-med-exp &\textbf{105.4}  &91.5  & \textbf{111.1} & \textbf{108.6}  &99.3 &73.7 & \textbf{102.1} & \textbf{105.3}  \\
walker2d-med-exp &\textbf{108.8}  &\textbf{109.6}  & \textbf{110.1}  & \textbf{112.7}  &\textbf{109.6} &\textbf{110.5} & \textbf{109.6} & \textbf{111.6}\\ \shline
locomotion-v2 total & 698.5  & 692.4  & \textbf{791.2}  & \textbf{739.2}  &674.3 & 629.5 & \textbf{741.2} & \textbf{711.6}\\ \shline
antmaze-umaze &74.0  & 87.5  & \textbf{93.4}  & \textbf{94.0}  &65.0 &86.4 &47.6 & \textbf{93.8}\\
antmaze-umaze-div &\textbf{84.0}  &62.2  & 66.2  & \textbf{80.2}  &41.3 &\textbf{62.4} &35.8 & \textbf{62.0}\\
antmaze-med-play &61.2  & 71.2  & 76.6  & \textbf{84.5}  &31.4 & 76.0 &42.5 & \textbf{86.6}\\
antmaze-med-div &53.7  & 70.0 & 78.6  & \textbf{84.8}  &25.8 & 74.8 &46.3 & \textbf{83.5} \\
antmaze-large-play &15.8  & 39.6  & 46.4  & \textbf{63.5}  &8.5 &31.6 &19.0 & \textbf{57.0}\\
antmaze-large-div &14.9  & 47.5  & 57.3  & \textbf{67.9}  &7.0 &36.4 & 25.2 & \textbf{56.4} \\ 
\shline
antmaze-v0 total &303.6  & 378.0  & 418.5  & \textbf{474.6}  &180.0  & 368.4 & 216.4 & \textbf{439.3}\\ 
\shline
total &1002.1  & 1070.4  & \textbf{1209.7}  & \textbf{1213.8}  &854.3 & 997.9 & 957.4 & \textbf{1150.9}\\
\shline
training time &80m &20m & 240m & 60m & 80m & 20m & 240m & 40m
\end{tabular*}
\vspace{.1cm}
\caption{
\textbf{Focused offline RL comparison.} IDQL performs on par or better than other SOTA offline RL methods. "-A" refers to reported results and "-1" allows only one hyperparameter.
}
\label{table:d4rl}
\end{table*}

\begin{table*}
\small
\begin{tabular*}{\textwidth}{@{\extracolsep{\fill}}lrrrrrrrr|r}
\multicolumn{1}{c}{\bf Dataset} & \multicolumn{1}{c}{\bf \%BC} & \multicolumn{1}{c}{\bf DT} & \multicolumn{1}{c}{\bf TD3} & \multicolumn{1}{c}{\bf CQL} & \multicolumn{1}{c}{\bf IQL} & \multicolumn{1}{c}{\bf EQL} & \multicolumn{1}{c}{\bf SfBC} &  \multicolumn{1}{c}{\bf DQL} & \multicolumn{1}{c}{\bf IDQL}  \\ 
\shline
locomotion-v2 total & 666.2 & 672.6 & 677.4 & 698.5 & 692.4 &  \textbf{725.3} & 680.4 & \textbf{791.2} & \textbf{739.2}\\
\shline
antmaze-v0 total &134.2 & 112.2 &163.8 &303.6 & 378.0 & 386 & \textbf{445.2} & 418.5 & \textbf{474.6} \\ 
\shline
total &800.4 & 784.8 &841.2 &1002.1 & 1070.4 & 1111.3 & 1125.6 & \textbf{1209.7} & \textbf{1213.8} \\
\shline
training time &10m &960m & 20m & 80m & 20m & 20m & 785m & 240m & 60m
\end{tabular*}
\vspace{.1cm}
\caption{
\textbf{Full comparison for offline RL.} We compare IDQL-A to other prior offline RL methods. IDQL outperforms all methods in total score and receives the strongest antmaze results. 
}
\label{table:totalfulld4rl}
\end{table*}

In the standard evaluation protocol, IDQL performs competitively to the best prior methods on the locomotion tasks while outperforming prior methods on the antmaze tasks. In the one hyperparameter regime, the performance of IDQL degrades only slightly from the results in Table~\ref{table:d4rl}, while the other prior methods suffer considerably more, particularly on the more challenging antmaze domain. Thus, with limited tuning, IDQL outperforms the prior methods by a very significant margin. This is specifically apparent in the antmaze domain, where our one hyperparameter results outperfrom the best method (IQL) by +70 points. The details for this experiment can be found in Appendix~\ref{experimentdetails}. 

Furthermore, we compare training time between the different methods in Table~\ref{table:fulld4rl}. IDQL remains computationally efficient like IQL. In particular, IDQL is much faster than the other two diffusion methods \citet{chen2022offline} and \citet{wang2022diffusion}. Although IDQL uses a more expressive policy model, the critic training is completely separated from this model, leading to IDQL's computational efficiency.
\vspace{-0.4cm}
\subsection{Online Finetuning}

After offline training, policies can be improved with online interactions. We test the procedure of freezing the behavior policy and finetuning the value networks only, as well as finetuning all networks, as described in Section~\ref{practicalfinetuning}. We compare to current state-of-the-art finetuning methods: Cal-QL~\citep{nakamoto2023cal}, RLPD~\citep{ball2023efficient}, and IQL~\citep{kostrikov2021offline}. Results are presented in Table~\ref{table:fullfinetune}. We see large improvement in both pre-training and final fine-tuning performance compared to IQL. IDQL also remains competitive with RLPD and Cal-QL in finetuning, while having stronger pre-training results. Most of the gains come from improvements in the hardest antmaze-large environments.

\begin{table*}
\centering
\small
\begin{tabular*}{\textwidth}{@{\extracolsep{\fill}}lrrr|rr}
\multicolumn{1}{c}{\bf Dataset}  & \multicolumn{1}{c}{\bf Cal-QL} & \multicolumn{1}{c}{\bf RLPD}  & \multicolumn{1}{c}{\bf IQL}  & \multicolumn{1}{c}{\bf IDQL-Max} & \multicolumn{1}{c}{\bf IDQL-Imp}\\ 
\shline
    antmaze-umaze & $- \rightarrow -$ & $0.0 \rightarrow \textbf{99.0}$  & $\textbf{86.7} \rightarrow \textbf{96.0}$ & $\textbf{92.0} \rightarrow \textbf{99.0}$ & $\ \textbf{93.5} \rightarrow \textbf{99.5}$\\
    antmaze-umaze-diverse  & $- \rightarrow  -$ & $0.0 \rightarrow \textbf{99.0}$ & $\textbf{75.0} \rightarrow 84.0$ & $\textbf{78.7} \rightarrow 85.6$ & $\textbf{78.4} \rightarrow 73.0$ \\
    antmaze-medium-play & $54.0 \rightarrow \textbf{98.0}$ & $0.0 \rightarrow \textbf{99.5}$ & $72.0 \rightarrow \textbf{95.0}$ & $\textbf{83.3} \rightarrow \textbf{97.8}$ & $\textbf{84.4} \rightarrow \textbf{94.0}$\\
    antmaze-medium-diverse  & $73.0 \rightarrow \textbf{98.0}$ & $0.0 \rightarrow \textbf{98.0}$ & $\textbf{68.3} \rightarrow \textbf{92.0}$ & $\textbf{84.7} \rightarrow \textbf{98.0}$ & $\textbf{84.0} \rightarrow \textbf{98.7}$\\
    antmaze-large-play & $28.0 \rightarrow \textbf{90.0}$  & $0.0 \rightarrow \textbf{88.0}$  & $25.5 \rightarrow 46.0$ & $\textbf{60.1} \rightarrow \textbf{88.0}$ & $\textbf{60.3} \rightarrow \textbf{90.0}$ \\
    antmaze-large-diverse  & $32.0 \rightarrow \textbf{94.0}$ & $0.0 \rightarrow \textbf{87.5}$ & $42.6 \rightarrow 60.7$ & $\textbf{61.1} \rightarrow \textbf{90.7}$ & $\textbf{61.4} \rightarrow \textbf{93.0}$  \\ \shline
    total  & $- \rightarrow -$ & $0.0 \rightarrow \textbf{571.0}$ & $408.2 \rightarrow 473.7$ & $\textbf{459.9} \rightarrow \textbf{559.1}$ & $\textbf{462} \rightarrow \textbf{548.2}$ \\
\end{tabular*}
\vspace{.1cm}
\caption{
\textbf{Online finetuning results.} We compare finetuning performance with IDQL using two approaches: "-Max" refers to freezing the behavior policy and finetuning the value networks and "-Imp" refers to finetuning all networks as described in Section~\ref{practicalfinetuning}. 
}
\label{table:fullfinetune}
\end{table*}

\subsection{IQL Objective Ablations}
\label{objectiveablation}
In Section \ref{acmethod}, we discussed different objectives and the different implicit policies they induce. We compare the critic losses on the D4RL benchmarks in Figure \ref{figure:objective_ablation}. Both expectiles and quantiles perform well with argmax extraction, but the exponential loss is unstable and performs worse. This indicates that the exponential critic proposed by \citet{garg2023extreme} and \citet{xu2023offline} require more tuning and tricks to perform well, though these works also report instability issues with the objective. The implicit policy distributions also do not perform as well as the argmax extraction, but this is most likely because these policies have a non-zero probability of taking a poor action. This follows similar findings that stochastic policies work best for critic learning, but deterministic policies work best for evaluation \citep{haarnoja2018soft, brandfonbrener2021offline}. Overall, the expectile objective performs the strongest with greedy extraction.
\begin{figure}
    \includegraphics[width=0.24\textwidth]{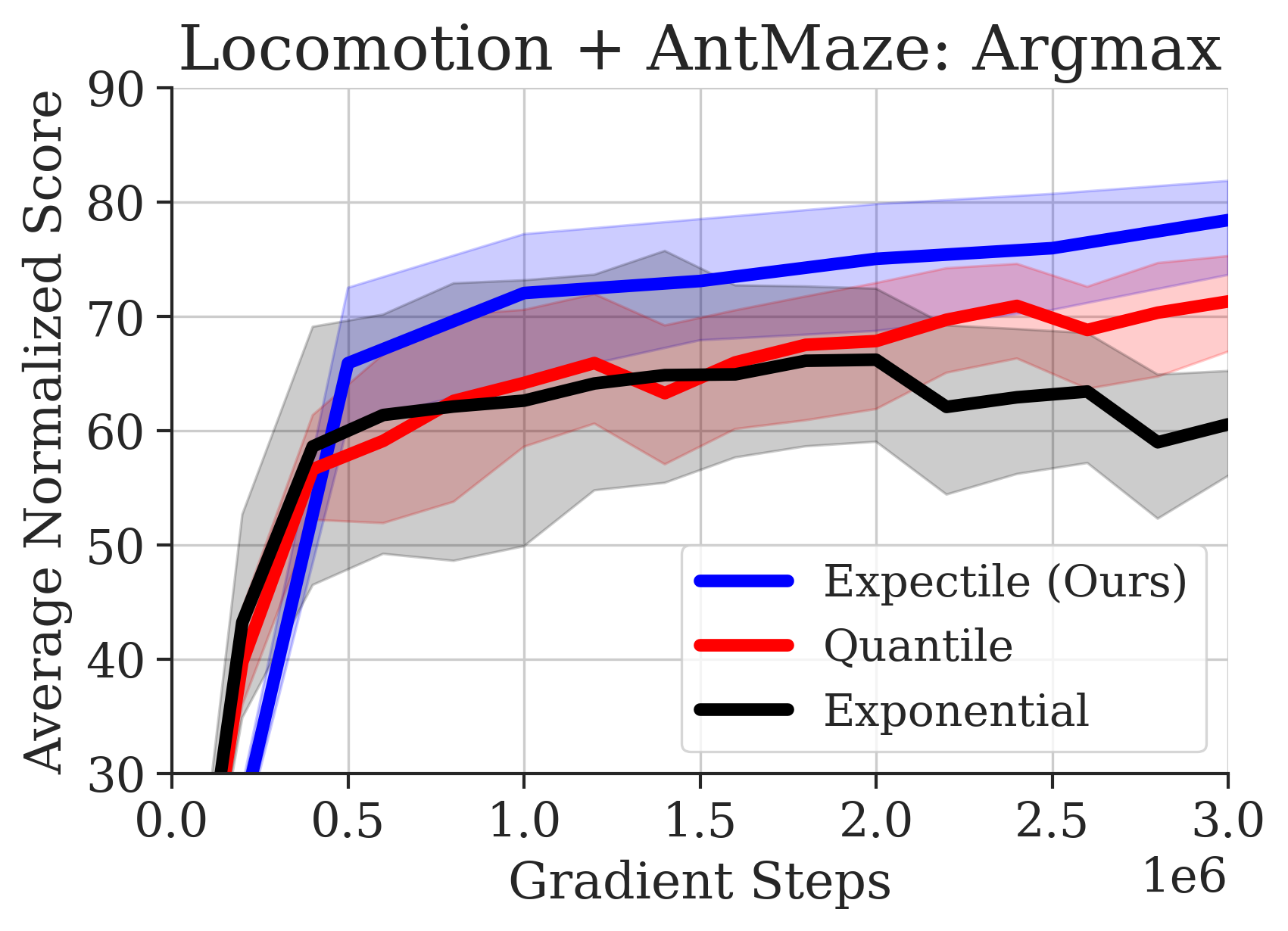} \includegraphics[width=0.23\textwidth]{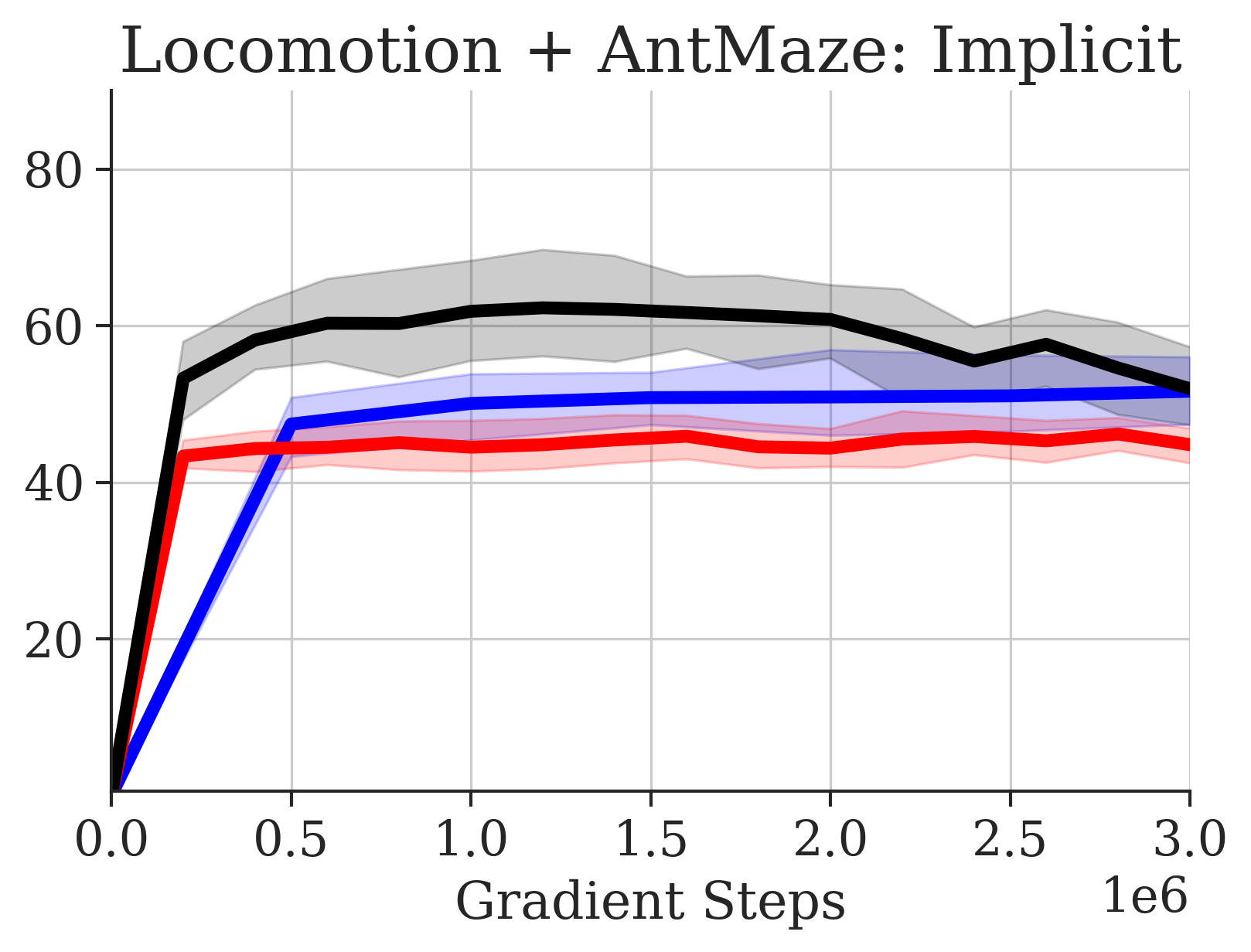} \includegraphics[width=0.24\textwidth]{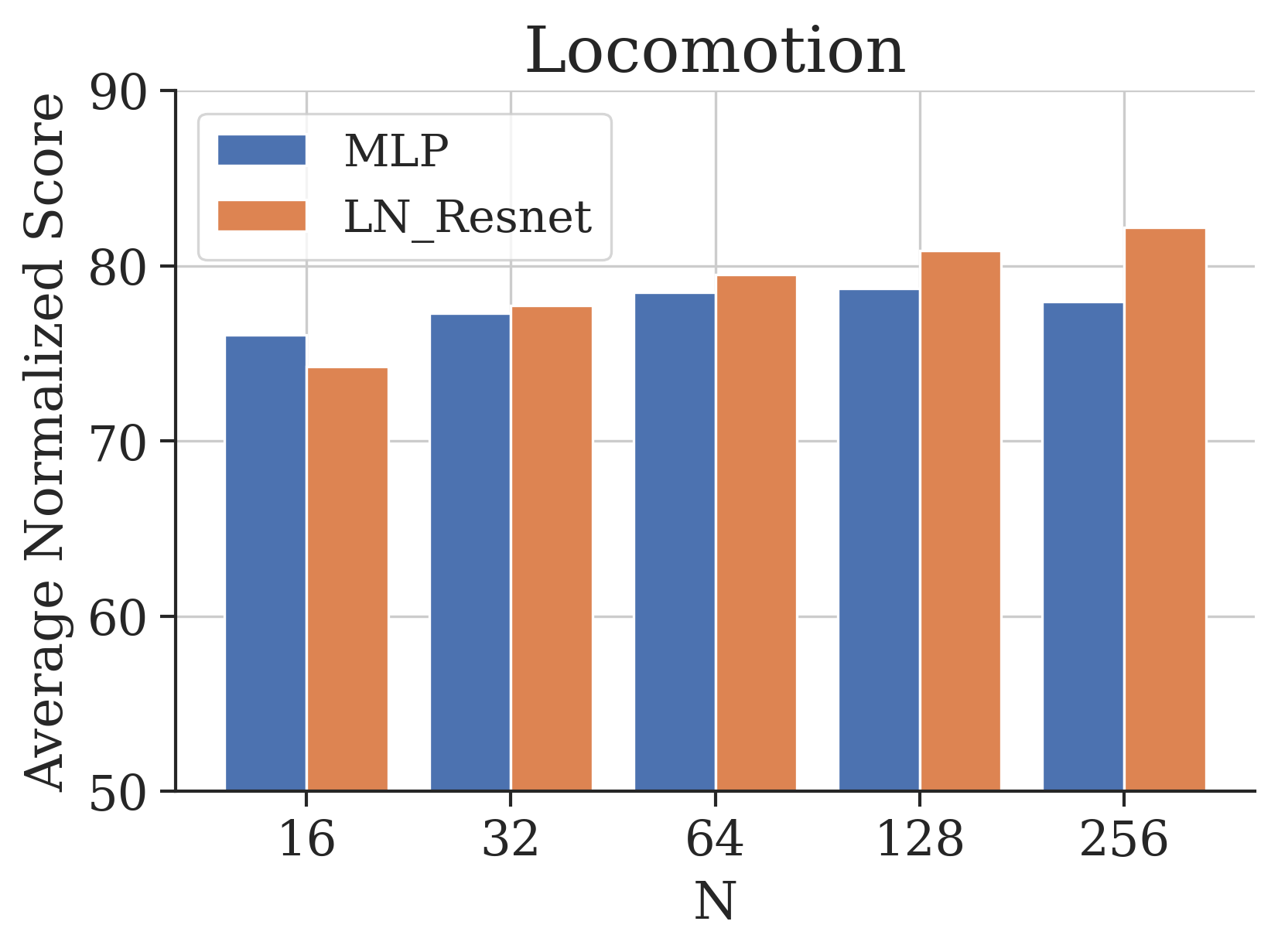} \includegraphics[width=0.23\textwidth]{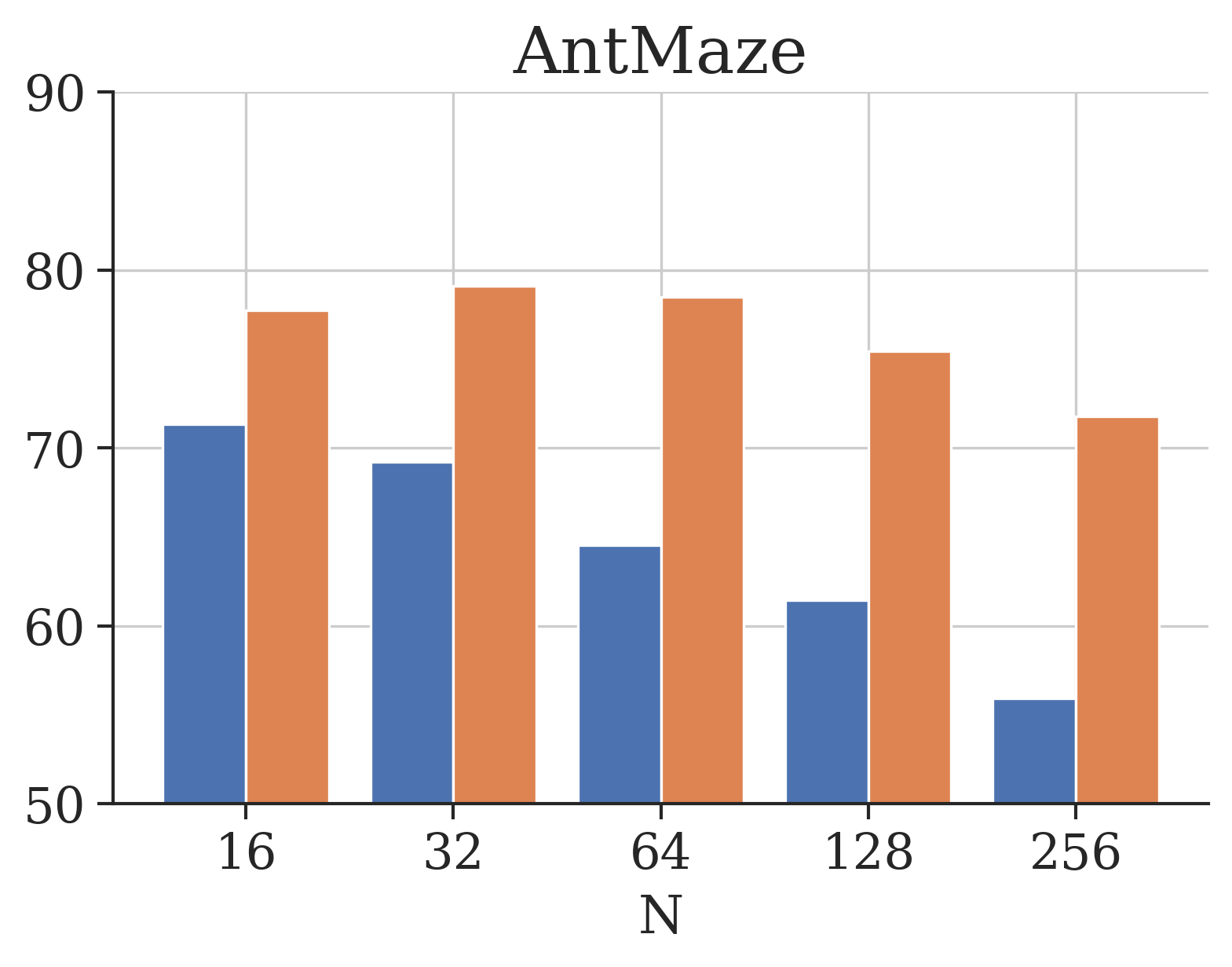} 
    \caption{\textbf{Objective and architecture ablation.} We compare the IQL objectives on all D4RL tasks using argmax extraction (first) and implicit policy extraction (second). We also include ablations on architecture choice (MLP vs LN\_Resnet) on locomotion (third) and antmaze (fourth) tasks.}
    \label{figure:objective_ablation}
\end{figure}

\subsection{Evaluating Diffusion Architecture Choices}
\label{designchoices}
In Section \ref{section:issuesdiff}, we introduced design choices that were important for reducing outliers and increasing the expressivity of diffusion models. To confirm that our architecture aids in performance and sensitivity to $N$, we compare our LN\_Resnet architecture to a three-layer MLP which contains a similar number of activation layers. We sweep over $N$ and measure the total final score. Results are in Figure~\ref{figure:objective_ablation}. Using an LN\_resnet is crucial for a reduction in sensitivity to $N$: for locomotion results, a higher $N$ leads to stronger performance, and for antmaze results increasing $N$ has a small effect on results. For the MLP architecture, increasing $N$ decreases performance, especially in antmaze.

\section{Discussion and Limitations}
In this work, we generalize IQL into an actor-critic method, where choices of convex asymmetric critic loss induce a behavior regularized implicit actor (Theorem~\ref{thm:Implicit_Policy}). The implicit policy is shown to be a complex importance weighted behavior distribution. This suggests that policy extraction methods based on simple Gaussian policies used in prior work might not perform well in IQL. We confirm this hypothesis by proposing a new policy extraction approach based on expressive diffusion models, describe a number of architecture design decisions that make such policies work well in practice, and present state-of-the-art results across offline RL benchmarks. Our method performs particularly well when the amount of hyperparameter tuning is restricted, which is important for practical applications of offline RL where tuning is often difficult or impossible. While IDQL performed well in most settings, there were some issues with overfitting on environments with a small dimension action space (e.g., locomotion). Adding dropout mitigates this issue but at the expense of other environments. Furthermore, IDQL does not work well in online finetuning of the Adroit environments. We suspect that excessive pre-training harms online fine-tuning performance in this domain.

Our analysis shows that, although we can generalize IQL to use a variety of loss functions, the original expectile loss ends up performing the best on current tasks. However, we believe that this generalization still has considerable value in informing future research. First, it illuminates how IQL is actually an actor-critic method, and provides for an entire family of implicit actor-critic methods. Future work might investigate new and more effective loss functions. Our work also illustrates how critical the choice of policy extraction method is for implicit Q-learning methods. Finally, our work provides an effective, easy to implement, computationally efficient, and relatively hyperparameter insensitive approach for integrating diffusion models into offline RL.

\section{Acknowledgements}

This research was supported by the Office of Naval Research and AFOSR FA9550-22-1-0273, with compute support from Berkeley Research Computing. We thank Manan Tomar for help with early drafts of the paper.

\bibliographystyle{plainnat}
\bibliography{main_paper.bib}

\medskip

{
\small


\newpage

\appendix
\section{Reinforcement Learning Definitions}
\label{prelim}
RL is formulated in the context of a Markov decision process (MDP), which is defined as a tuple $( \mathcal{S}, \mathcal{A}, p_0(s), p_M(s' |s, a), r(s,a), \gamma )$ with state space $\mathcal{S}$, action space $\mathcal{A}$, initial state distribution $p_0(s)$, transition dynamics $p_M(s' | s, a)$, reward function $r(s, a)$, and discount $\gamma$. The goal is to recover a policy $\pi(a|s)$ that maximizes the discounted sum of rewards or return in the MDP, 
$$\pi^* = \argmax_\pi \mathbb{E}_{\rho(\pi)}[\sum_{t=0}^{\infty}\gamma^t r(s_t, a_t)] ,$$
where $\rho(\pi)$ describes the distribution over trajectories, $\{s_t, a_t \sim \pi(a|s_t), s_t' \sim p_M(s'|s_t, a_t) \}_{t=0}^{\infty}$, induced by a given policy $\pi$.
A widely used framework for off-policy RL is Q-learning \citep{watkins1992q}, which involves fitting a Q-function $Q(s,a)$ to match the discounted returns starting at state $s$ and action $a$ and following the best current policy at the next state $s'$ ($\argmax_{a'} Q(s',a')$). 
\section{Proof of Theoretical Results}
\label{proofs}
\subsection{Proof of Theorem~\ref{thm:Implicit_Policy}}
\begin{proof} 
We write the objective in Equation~\ref{eqn:general_IQL}.
$$\argmin_{V(s)}\mathbb{E}_{a \sim \mu(a|s)}[f(Q(s,a) - V(s))]$$
Note that the objective function is convex with respect to $V(s)$

$$0 = \left.  \frac{\partial}{\partial V(s)}\mathbb{E}_{a \sim \mu(a|s)}[ f(Q(s,a) - V(s))]\right|_{V=V^*}$$ 
$$ = -\mathbb{E}_{a \sim \mu(a|s)}[f'(Q(s,a) - V^*(s))]$$
Due to convexity of $f$ and the assumption $f'(0) = 0$, $f'(x) = |f'(x)| \cdot \sign(x) = |f'(x)| \frac{x}{|x|} $
$$ = \mathbb{E}_{a \sim \mu(a|s)}\bigg[\frac{|f'(Q(s,a) - V^*(s))| (Q(s,a) - V^*(s))}{|Q(s,a) - V^*(s)|}\bigg].$$

We then define the implicit policy to be
 $\pi_{\text{imp}}(a|s) = \frac{\mu(a|s)|f'(Q(s,a) - V^*(s))|}{Z_{\text{imp}}|Q(s,a) - V^*(s)|}$, where $Z_{\text{imp}}$ is a normalization constant, and rewrite the above expression as
$$ = \mathbb{E}_{a \sim \pi_{\text{imp}}(a|s)} [(Q(s,a) - V^*(s))]$$

$$ = \left. \frac{\partial}{\partial V(s)}
 -\frac{1}{2} \cdot \mathbb{E}_{a \sim \pi_{\text{imp}}(a|s)}[(Q(s,a) - V(s))^2] \right|_{V=V^*}= 0$$

This means that $V^*(s)$ is a solution for the optimization problem 

$$\argmin_{V(s)}\mathbb{E}_{a \sim \pi_{\text{imp}}(a|s)}[(Q(s,a) - V(s))^2]$$
\end{proof}

\section{Additional Derivations}
\label{additionalderivations}
\subsection{Proof of Optimal Solution for the Exponential Objective}

\begin{proof} 
We can rewrite the objective to remove irrelevent terms
$$\argmin_{V(s)}\mathbb{E}_{a \sim \mu(a|s)}[\exp(\alpha(Q(s,a) - V(s))) - \alpha (Q(s,a) - V(s))]$$
$$ = \argmin_{V(s)}\mathbb{E}_{a \sim \mu(a|s)}[\exp(\alpha(Q(s,a) - V(s))) + \alpha V(s)]$$
We expand the expectation for the objective. Assume $\mu(a|s) > 0$
$$ = \argmin_{V(s)} \sum_a \mu(a|s) \bigg( \exp(\alpha(Q(s,a) - V(s))) + \alpha V(s)\bigg)$$
Note that the objective function is convex with respect to $V(s)$.
$$0 = \left. \frac{\partial}{\partial V(s)}\right|_{V=V^*} \sum_a \mu(a|s) \bigg( \exp(\alpha(Q(s,a) - V(s))) + \alpha V(s)\bigg)$$ 
$$ = \sum_a \mu(a|s) \bigg( -\alpha \exp(\alpha(Q(s,a) - V^*(s))) + \alpha \bigg)$$

$$
      = \alpha \sum_a \mu(a|s) \bigg( -\exp(\alpha(Q(s,a) - V^*(s))) \bigg) + \alpha \sum_a \mu(a|s) 
$$
$$
      = \alpha \sum_a \mu(a|s) \bigg( -\exp(\alpha(Q(s,a) - V^*(s))) \bigg) + \alpha
$$
$$
      = \alpha \exp(-\alpha V^*(s)) \sum_a \bigg( -\exp(\alpha Q(s,a) + \log \mu(a|s)) \bigg) + \alpha
$$

We can now easily solve for $V^*(s)$. 
$$1 = \exp(-\alpha V^*(s)) \sum_a \bigg(\exp(\alpha Q(s,a) + \log \mu(a|s))\bigg)$$
$$\frac{1}{\sum_a \bigg( \exp(\alpha Q(s,a) + \log \mu(a|s))\bigg)} = \exp(-\alpha V^*(s))$$
$$\frac{1}{\alpha} \log \sum_a \bigg( \exp(\alpha Q(s,a) + \log \mu(a|s))\bigg) = V^*(s) = V_{\exp}(s)$$
\end{proof}

\subsection{Kullback–Leibler Divergence between Exponential Implicit Policy and behavior Distribution}
We look to compute the KL divergence between the implicit policy of the exponential distribution and the behavior policy.
\begin{align*}
D_{KL}(\mu(a|s)\|\pi_{\exp}(a|s)) &= \sum_{a} \mu(a|s) \log \big( \frac{\mu(a|s)}{\pi_{\exp}(a|s)} \big) \\
&= \sum_{a} \mu(a|s) \log \big( \frac{\exp(\log \mu(a|s))}{\exp(\alpha (Q(s,a) - V(s)) + \log \mu(a|s))}\big)\\
&= \sum_{a} \mu(a|s) \log \big( \frac{1}{\exp(\alpha (Q(s,a) - V(s)))}\big)\\
&= \sum_{a} \mu(a|s) \alpha(V(s) - Q(s,a))\\
&= \mathbb{E}_{(s, a) \sim \mathcal{D}}[\alpha(V(s) - Q(s,a))]
\end{align*}

This shows that the divergence of the behavior policy with the implicit policy is related to the advantage as well as the temperature hyperparameter. 

\section{Experimental Details}
\label{experimentdetails}
Our implementation is based from the jaxrl repo~\citep{jaxrl} which uses the JAX~\citep{jax2018github} framework using Flax~\citep{flax2020github}. For computing results, we use Titan X GPUs from a university-provided cluster. The cluster contains 80+ GPUs. 
\subsection{Standard Offline RL}
For the standard offline RL benchmark, we train the critic with 1.5 million gradient updates and the diffusion behavior policy with 3 million gradient updates. We found critic learning to be slightly unstable with more updates than 2 million. All reported results ("-A") for Table~\ref{table:d4rl} are taken from Table 1 in \citet{kostrikov2021offline} except for EQL, SfBC, and DQL which are taken directly from their main offline RL results table. As in IQL, we standardize the rewards for the locomotion tasks and we subtract rewards by one for the antmaze tasks. This is also done for fairness in the one hyperparameter reruns of prior methods ("-1"). For training time, we take numbers also from \citet{kostrikov2021offline} and from \citet{chen2022offline}.

\subsection{One-Hyperparameter Offline RL}
For the one hyper parameter experiment, we re-implemented IQL from the \href{https://github.com\/ikostrikov/implicit_q_learning}{IQL repo.}, we reran CQL from the \href{https://github.com/young-geng/JaxCQL}{CQL repo.}, and we reran DiffusionQL from the \href{https://github.com/Zhendong-Wang/Diffusion-Policies-for-Offline-RL}{DQL repo}. We only sweep over the main-hyper parameter mention in the paper and select the best performing one per domain; all other hyper parameters are left constant. For CQL, we sweep over $\lambda \in \{1.0, 2.0, 5.0, 10.0\}$ or the weight of the CQL term, for DQL we swept over the $\eta \in \{1.0, 2.0, 2.5, 3.0\}$ or weight of the Q maximization objective, and for IQL and IDQL, we swept over the expectile $\tau \in \{0.6, 0.7, 0.8, 0.9\}$. For each domain (locomotion and antmaze), we select the best-performing hyperparameter (e.g. $\tau = 0.7$ for locomotion and $\tau=0.9$ for antmaze).

While this may not fairly represent each algorithm, our main intention was to show how our method can perform well out of the box (i.e. the posted GitHub implementation) without the need for excessive tuning. We find that many non-IQL methods tend to overtune their ant maze results to include many more changes than used in locomotion tasks. For example, CQL requires many changes from their locomotion configuration \href{https://github.com/young-geng/JaxCQL/issues/4}{linked here} to get strong antmaze results.
While not bad per se, it does indicate that these methods require more careful tuning to work well across domains. Furthermore, some methods do not fairly present their results as they either tune their method per environment or take the max of their evaluation curve. As a result, another intention of ours was to clearly state a protocol to fairly compare algorithms side by side.

\subsection{Finetuning}
For fine-tuning, we pretrain the critic for 1 million steps and the diffusion BC actor for 2 million steps. During online finetuning, we take gradient step on the critic for each environment step. If the actor is fine-tuned as well, we take 2 gradient steps per environment step. We found fine-tuning failed to improve from pre-training on the adroit tasks, but this is not unexpected. The adroit fine-tuning tasks require significant exploration to achieve strong returns, so having behavior regularization can be harmful. We leave this as future work.

\section{Table Hyperparameters}

The critic and value network follow the same parameterization as in IQL~\citep{kostrikov2021offline} (2 Layer MLP with hidden size 256 and ReLU activations). Other details mentioned about architecture are for the diffusion model.

\begin{center}
\def\arraystretch{1.35}
\begin{tabular}{|l|c|} 
\hline
\textbf{LR} (For all networks)  & 3e-4 \\
\textbf{Critc Batch Size}  & 256 \\
\textbf{Actor Batch Size}  & 1024 \\
\textbf{$\tau$ Expectiles}  & $0.7$ (locomotion), $0.9$ (antmaze)\\
\textbf{$\tau$ Quantiles}  & $0.6$ (locomotion), $0.8$ (antmaze)\\
\textbf{$\alpha$ Exponential}  & $1.0$ (locomotion), $0.5$ (antmaze)\\
\textbf{Critic Grad Steps} & 1.5e6 ("-A"), 1e6 ("-1") \\
\textbf{Actor Grad Steps}  & 3e6 ("-A"), 2e6 ("-1") \\
\textbf{Critic Pre-Training Steps} & 1e6 (Figure~\ref{table:fullfinetune}) \\
\textbf{Actor Pre-Training Steps}  &  2e6 (Figure~\ref{table:fullfinetune}) \\
\textbf{Target Critic EMA} & 0.005 \\
\textbf{T} & 5 \\
\textbf{N} & 32 (antmaze "-A"), 128 (loco "-A"), 64 ("-1") \\
\textbf{Beta schedule} & \text{Variance Preserving \citep{song2020score}} \\
\textbf{Dropout Rate} \citep{NIPS2015_8d55a249} & 0.1 \\
\textbf{Number Residual Blocks} & 3 \\
\textbf{Actor Cosine Decay} \citep{loshchilov2016sgdr} & Number of Actor Grad Steps\\
\textbf{Optimizer} & Adam \citep{kingma2014adam}\\

\hline
\end{tabular}
\end{center}

\section{Additional Experiments and Results}
\label{additionalablations}
\subsection{Other Prior Offline RL Work Results}
We compare our method to a number of recent offline RL algorithms discussed in the related work section: \%BC, Decision Transformers (DT) \citep{chen2021decision}, TD3+BC (TD3) \citep{fujimoto2021minimalist}, conservative Q-learning (CQL) \citep{kumar2020conservative}, implicit Q-learning (IQL) \citep{kostrikov2021offline}, extreme Q-learning (EQL) \citep{garg2023extreme}, and Selecting from Behavior Candidates(SfBC) \citep{chen2022offline}. We don't include all comparisons in the main paper to save space and since our "-1" method outperforms all the reported methods other than DQL. Some of these prior works tune the hyperparameters coarsely, for example, \citet{wang2022diffusion} uses per-task tuning on both locomotion and antmaze tasks and \citet{garg2023extreme} takes the max evaluation during training. This generally leads to better performance but implies an assumption of being able to do per-task online hyperparameter selection. Results are reported in Table~\ref{table:fulld4rl}. IDQL remains competitive with all prior methods on the locomotion tasks and outperforms all prior methods on the antmaze tasks. 
\begin{table*}
\small
\begin{tabular*}{\textwidth}{@{\extracolsep{\fill}}lrrrrrrrr|r}
\multicolumn{1}{c}{\bf Dataset} & \multicolumn{1}{c}{\bf \%BC} & \multicolumn{1}{c}{\bf DT} & \multicolumn{1}{c}{\bf TD3} & \multicolumn{1}{c}{\bf CQL} & \multicolumn{1}{c}{\bf IQL} & \multicolumn{1}{c}{\bf EQL} & \multicolumn{1}{c}{\bf SfBC} &  \multicolumn{1}{c}{\bf DQL} & \multicolumn{1}{c}{\bf IDQL}  \\ 
\shline
halfcheetah-m & 48.4 & 42.6& 48.3 &44.0 &47.4 & 47.7 & 45.9 & \textbf{51.1} & \textbf{51.0}\\ 
hopper-m &56.9 &67.6 &59.3 &58.5 & 66.3 & 71.1 & 57.1 & \textbf{90.5} & 65.4 \\
walker2d-m &75.0 & 74.0 & \textbf{83.7} &72.5 &78.3 & 77.9 & \textbf{81.5} & \textbf{87.0} & \textbf{82.5}\\
halfcheetah-mr &40.6 & 36.6 & 44.6 & \textbf{45.5} & 44.2 & 44.8 & 37.1 & \textbf{47.8} & \textbf{45.9}\\
hopper-mr  &75.9 & 82.7 &60.9 &\textbf{95.0} &\textbf{94.7}& \textbf{97.3}  & 86.2& \textbf{101.3} & \textbf{92.1}\\
walker2d-mr &62.5 & 66.6 & 81.8 &77.2 &73.9 & 75.9 & 65.1 & \textbf{95.5} & 85.1\\
halfcheetah-me &\textbf{92.9} &86.8 &\textbf{90.7} &\textbf{91.6} &86.7 & 89.8 & \textbf{92.6} & \textbf{96.8} & \textbf{95.9}\\
hopper-me &\textbf{110.9} & \textbf{110.9}& 98.0 &\textbf{105.4} &91.5 & \textbf{107.1} & \textbf{108.6} & \textbf{111.1} & \textbf{108.6} \\
walker2d-me &\textbf{109.0} & \textbf{108.1} &\textbf{110.1} &\textbf{108.8} &\textbf{109.6} &  \textbf{109.6} & \textbf{109.8} & \textbf{110.1} & \textbf{112.7}\\ \shline
locomotion-v2 total & 666.2 & 672.6 & 677.4 & 698.5 & 692.4 &  \textbf{725.3} & 680.4 & \textbf{791.2} & \textbf{739.2}\\ \shline
antmaze-u &62.8 &59.2 &78.6 &74.0 & 87.5& 87.2 & \textbf{92.0} & \textbf{93.4} & \textbf{94.0}\\
antmaze-ud &50.2 &53.0 &71.4 &\textbf{84.0} &62.2 & 69.2 & \textbf{85.3} & 66.2 & \textbf{80.2}\\
antmaze-mp &5.4 &0.0 &10.6 &61.2 & 71.2 & 73.5 & \textbf{81.3} & 76.6 & \textbf{84.2}\\
antmaze-md &9.8 &0.0 &3.0 &53.7 & 70.0& 71.2 & \textbf{82.0} & 78.6 & \textbf{84.8} \\
antmaze-lp &0.0 &0.0 &0.2 &15.8 & 39.6 & 41 & \textbf{59.3} & 46.4 & \textbf{63.5}\\
antmaze-ld &6.0 &0.0 &0.0 &14.9 & 47.5 & 47.3 & 45.5 & 57.3 & \textbf{67.9} \\ 
\shline
antmaze-v0 total &134.2 & 112.2 &163.8 &303.6 & 378.0 & 386 & \textbf{445.2} & 418.5 & \textbf{474.6} \\ 
\shline
total &800.4 & 784.8 &841.2 &1002.1 & 1070.4 & 1111.3 & 1125.6 & \textbf{1209.7} & \textbf{1213.8} \\
\shline
training time &10m &960m & 20m & 80m & 20m & 20m & 785m & 240m & 60m
\end{tabular*}
\vspace{.1cm}
\caption{
\textbf{Standard evaluation for offline RL.} IDQL performs on par or better than other SOTA offline RL methods. Algorithm names are shortened to save space. Results for our method are averaged over 10 seeds.
}
\label{table:fulld4rl}
\end{table*}

\subsection{Maze2D Results}
IQL performs quite poorly in the maze2d environment compare to other offline RL methods. Specifically, Diffuser~\citep{janner2022planning} performs very strongly when compared to IQL and utilizes diffusion. We look to see if our method aids the performance of IQL. Results are in Table~\ref{table:maze2d}. Our method outperforms IQL consistently but comes short against Diffuser. We suspect that model based planning generalizes better in the Maze2d environments.

\begin{table*}
\centering
\small
\begin{tabular*}{\textwidth}{@{\extracolsep{\fill}}lrr|r}
\multicolumn{1}{c}{\bf Dataset}  & \multicolumn{1}{c}{\bf IQL} & \multicolumn{1}{c}{\bf Diffuser}  & \multicolumn{1}{c}{\bf IDQL}\\ 
\shline
    umaze & $47.4$ & $\textbf{113.9}$ & $57.9$\\
    medium  & $34.9$ & $\textbf{121.5}$ & $89.5$ \\
    large & $58.6$ & $\textbf{123.0}$ & $90.1$\\
    \shline
    total  & $94$ & $\textbf{358.5}$ & $237.5$ \\
    
\end{tabular*}
\vspace{.1cm}
\caption{
\textbf{Maze2D results.} We include maze2d results compared to IQL and Diffuser. Our method outperforms IQL and remains close to diffusers performance.
}
\label{table:maze2d}
\end{table*}

\subsection{Diffusion Steps $T$ and Beta Schedule}
\begin{figure}
    \centering
    \includegraphics[height=3.0cm]{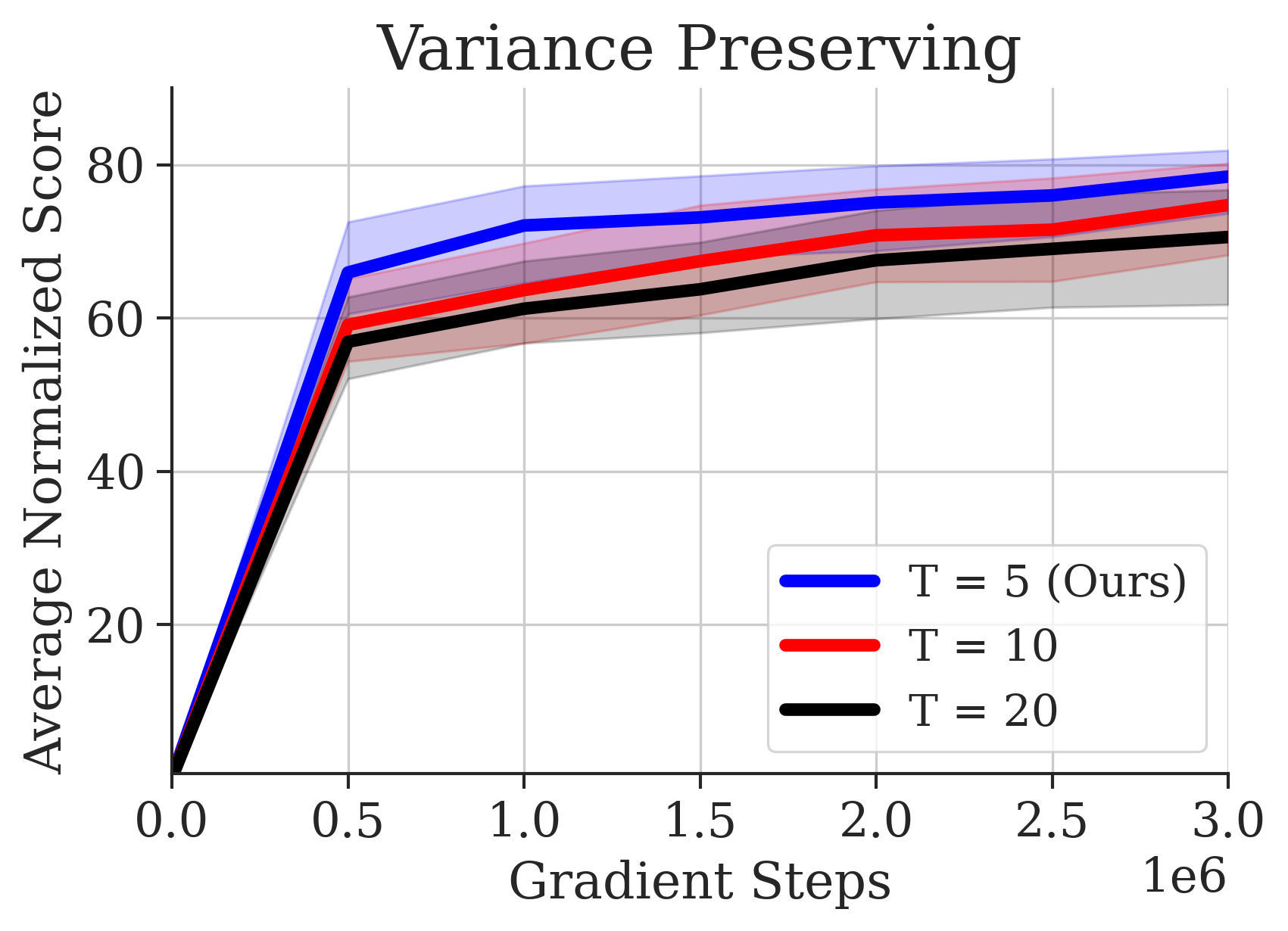} \includegraphics[height=3.0cm]{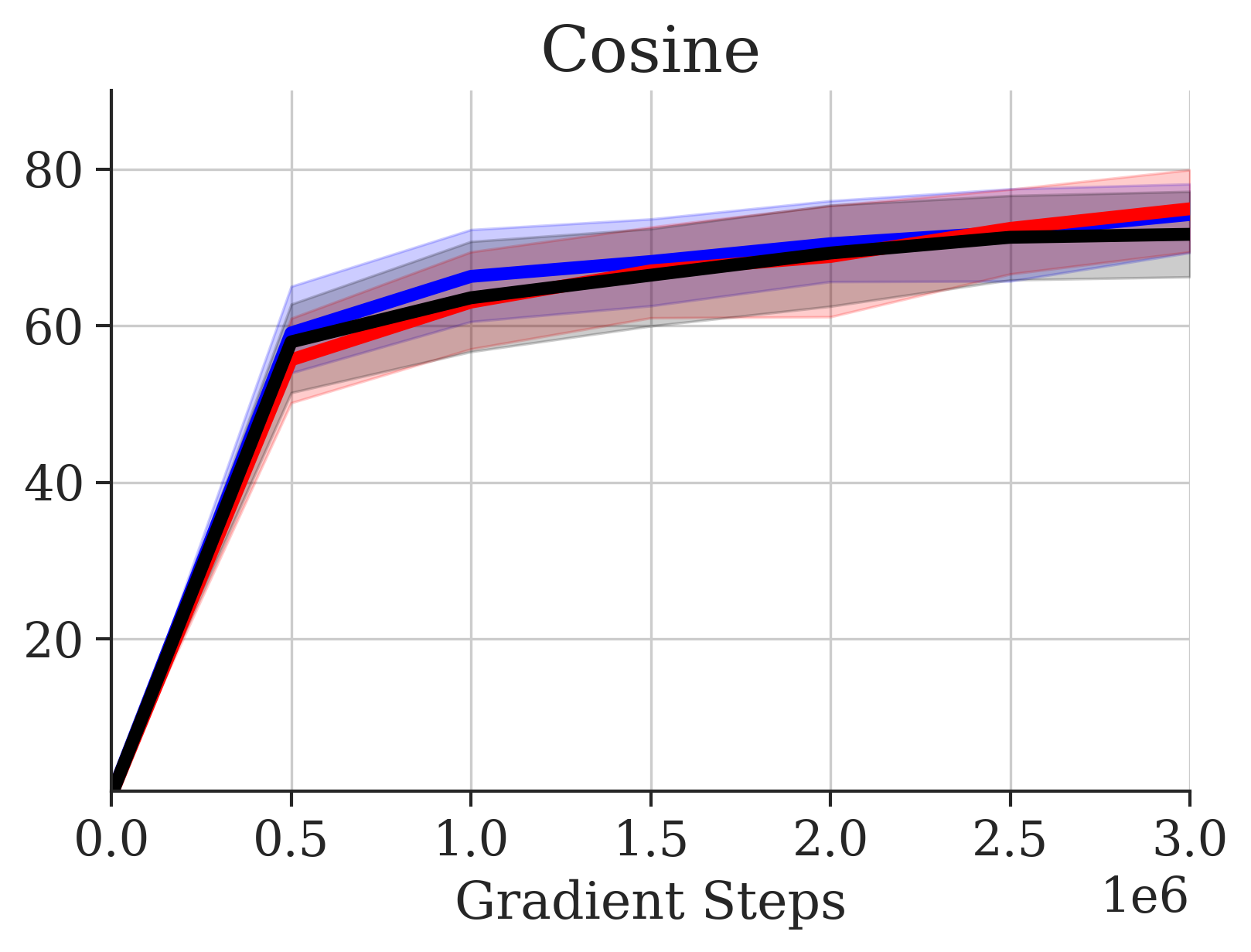}
    \includegraphics[height=3.0cm]{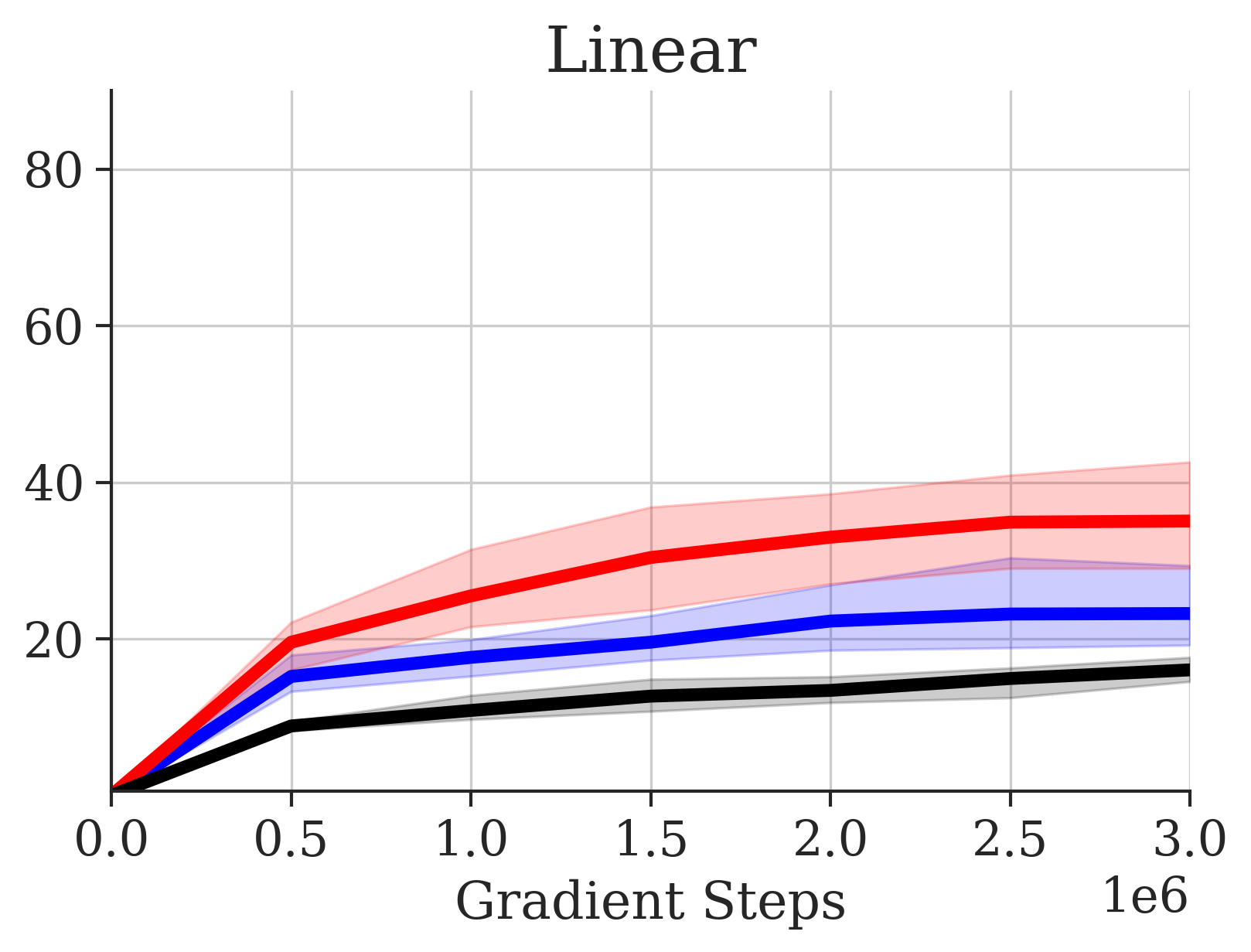}
    \caption{Sweeps over different schedules and $T$. The variance preserving schedule with low $T$ generally works the best.}
    \label{figure:beta_sweep}
\end{figure}
The choice of number of diffusion steps (T) was also pretty crucial to good performance on D4RL. Strangely, the T needed for quality samples of the simple 2D continuous datasets was much larger than required for D4RL. We found T = 50 to be best for the 2D datasets, but T = 5 to be best for D4RL. This might be because control datasets are more deterministic than the continuous datasets presented (which include lots of added noise on top of a simple signal), but this remains an open question. We include ablations of various T on D4RL in Figure~\ref{figure:beta_sweep}. Increasing T has a small but negative effect on performance. One thing to note is that for T > 20, evaluation is quite slow due to needing to resample the entire chain every step of the Markov process.  

Also, the choice of beta schedule has a strong impact in performance. We ablate over a linear \citep{ho2020denoising}, cosine \citep{nichol2021improved}, and variance preserving schedule \citep{song2020score} in Figure~\ref{figure:beta_sweep}. We found the vp to work the best for D4RL, but cosine also worked very well with small T. The linear schedule required far too large of T to perform well and ended up being a poor choice for D4RL tasks. We suspect that the signal to noise ratio induced by the schedules is very important in proper expression. In particular, the first noise step is a crucial part of the noising process. In summary, we recommend to sweep over different T and schedule for the best possible modeling of the action space.

\subsection{Batch Size}
\begin{figure}
    \centering
    \includegraphics[height=3.5cm]{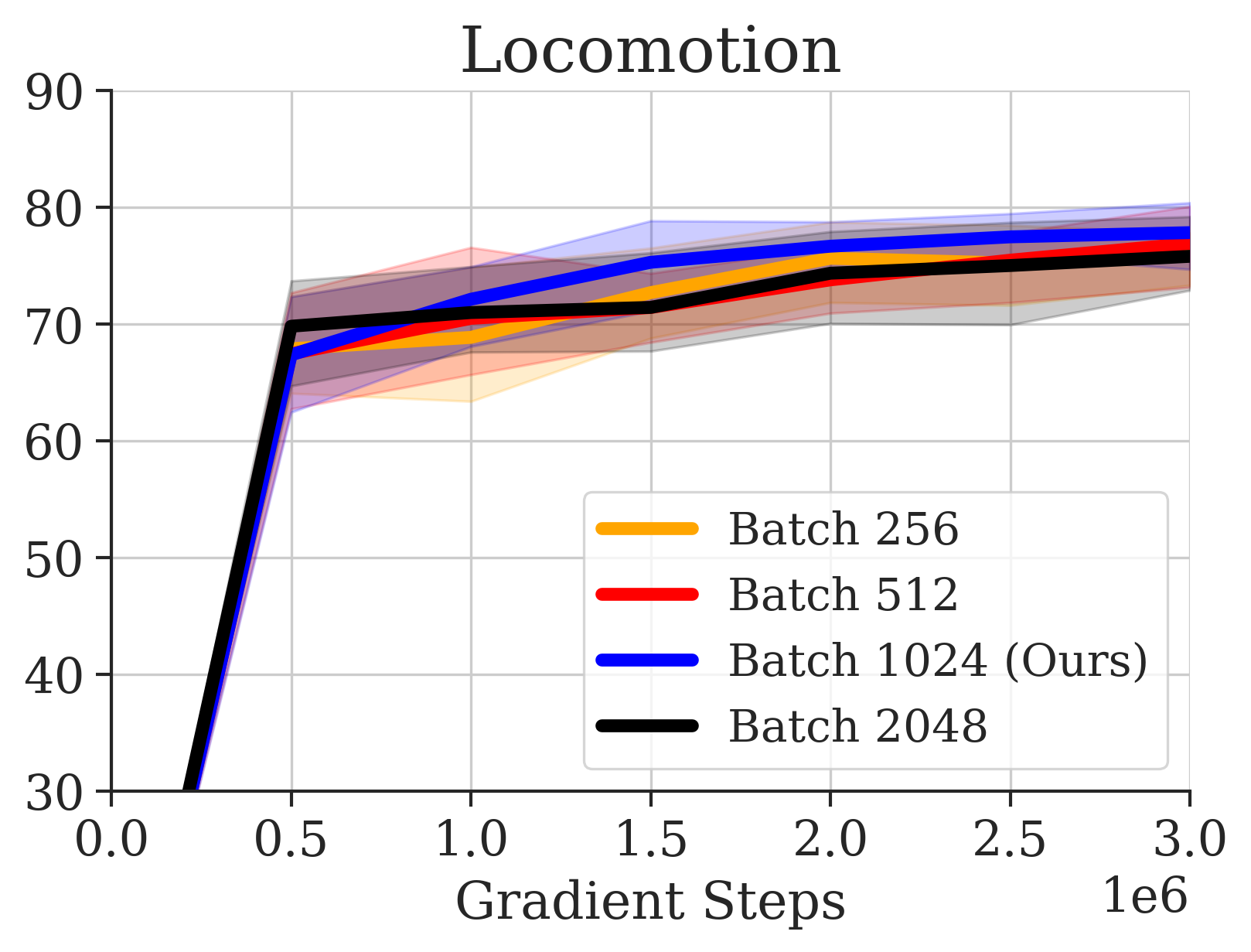} \includegraphics[height=3.5cm]{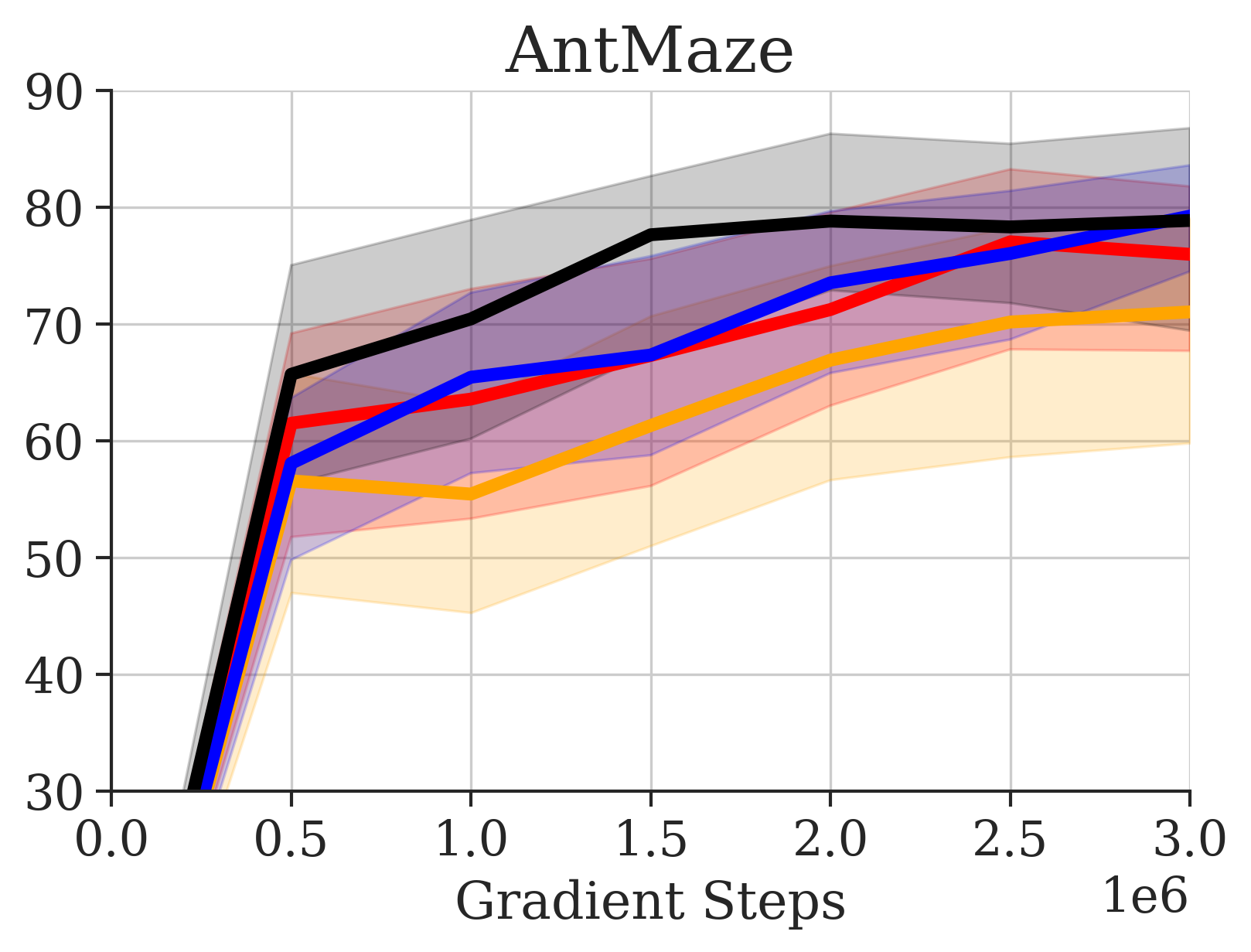} 
    \caption{Batch size ablation. A higher batch size is important for the antmaze tasks, but not important for the locomotion tasks.}
    \label{figure:batch_sweep}
\end{figure}
As discussed in Section~\ref{section:issuesdiff}, batch size was important for reducing outliers samples. We ablate over different batch sizes on the DDPM loss in the D4RL benchmarks. Results are in Figure~\ref{figure:batch_sweep}. While batch size has a small effect on the locomotion tasks, having a larger batch size leads to stronger performance on the antmaze tasks. Though, the batch size is not as crucial as the choice of architecture or capacity of the model.

\subsection{AWR weighted DDPM Ablation.}
To show that our method performs better than directly training a policy, we ablate using AWR to weigh the DDPM loss over a batch. We try two versions: in one we sample the policy only once and in the other we sample $N = 64$ times and take the sample that maximizes the critic. Results are in Figure~\ref{figure:awr_ablation}. In the case of the one sample method, the AWR weighted objective usually performs worse than IDQL overall and at best performs on par with the multi-sample setup. As a result, AWR is not necessary for achieving strong performance and therefore adds an unnecessary training parameter. 
\begin{figure}
    
    \centering
    \includegraphics[height=3.5cm]{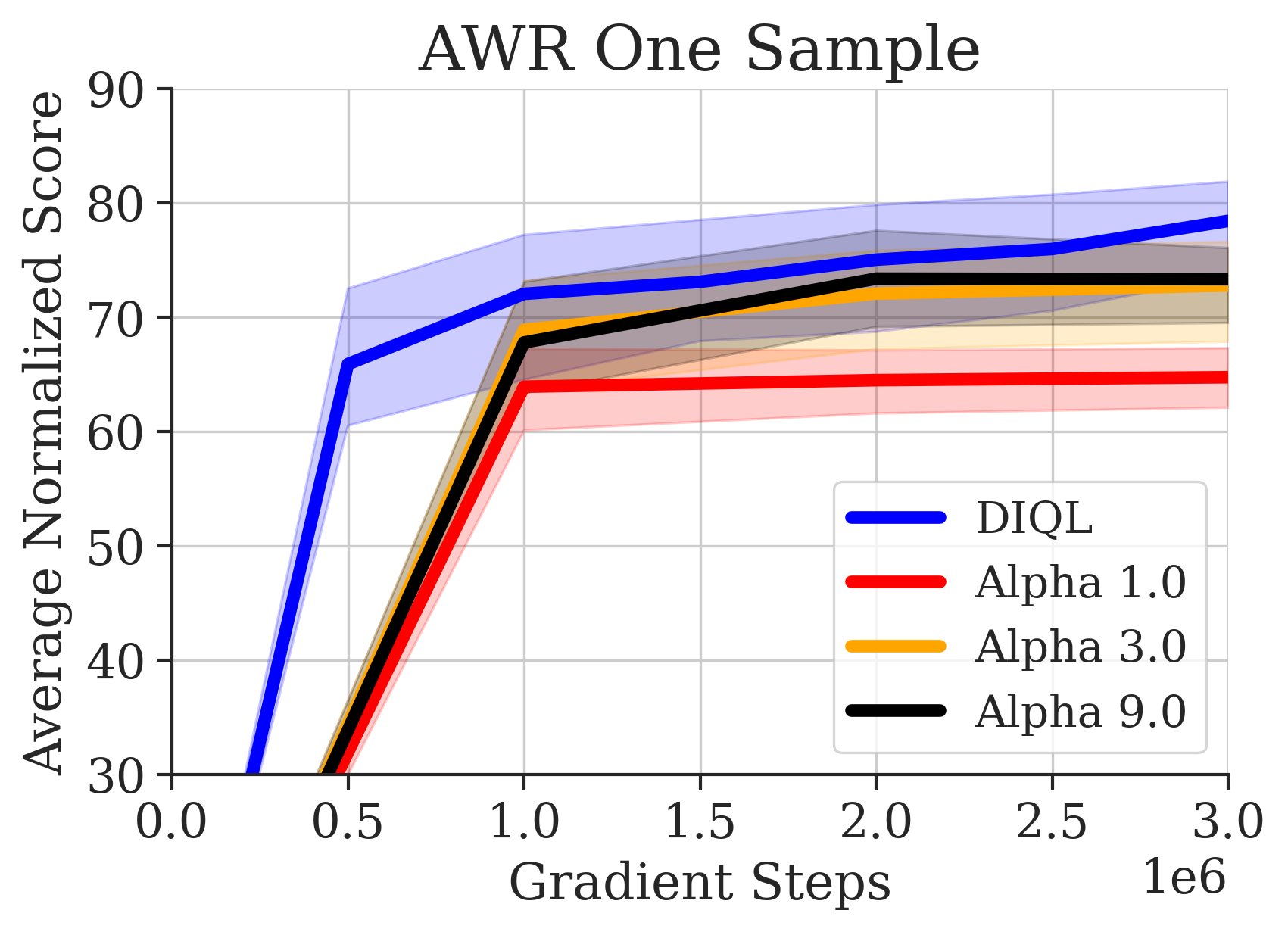} \includegraphics[height=3.5cm]{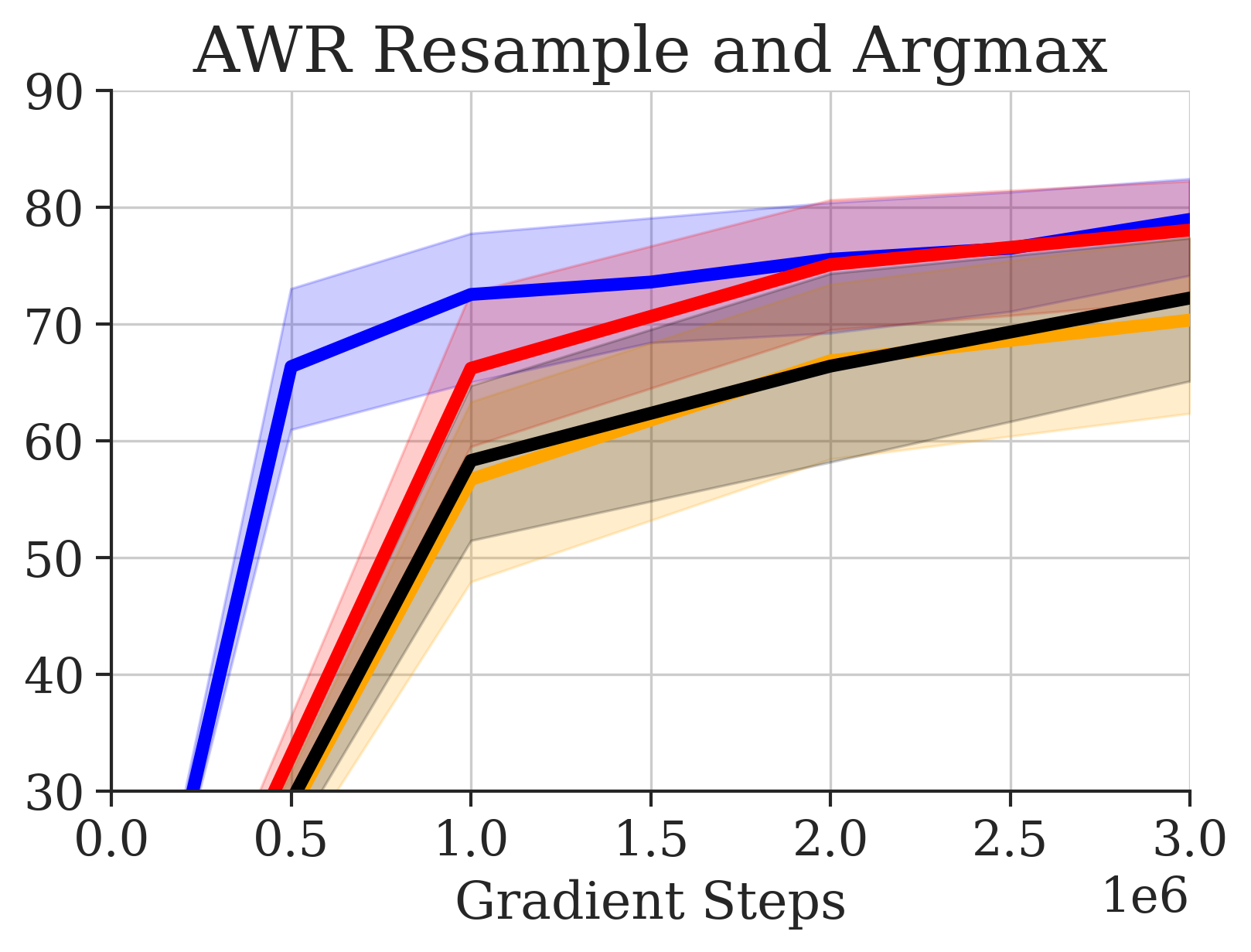}
    \caption{Weighted DDPM loss ablations using AWR. We try both one sample (Left) and multiple samples plus argmax (Right). Results are averaged over 10 seeds. }
    \label{figure:awr_ablation}
\end{figure}

\subsection{Other Important Architecture Details}
\begin{figure}
    
    \centering
    \includegraphics[height=4.0cm]{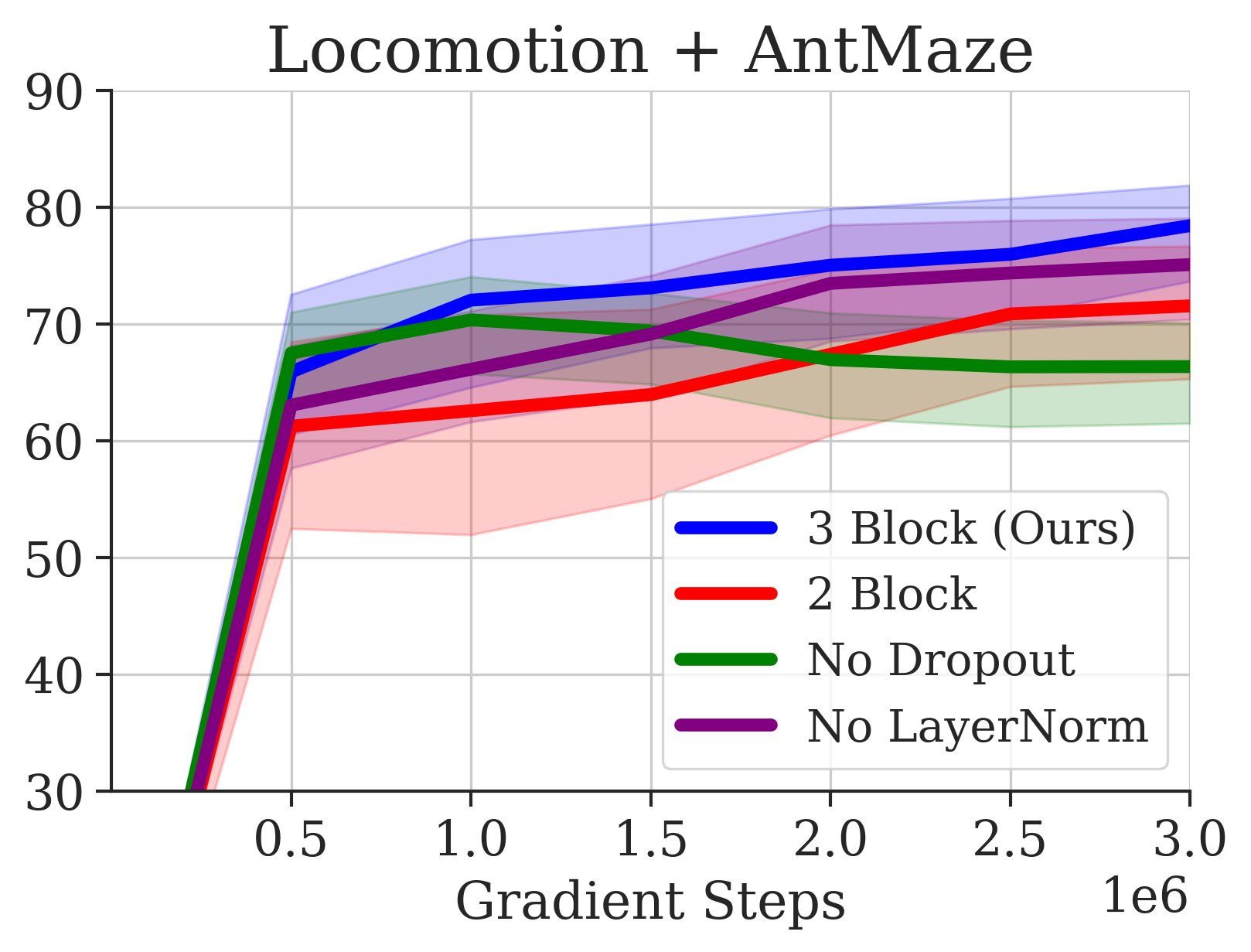}
    \caption{Ablation of capacity, dropout, and layer norm on offline results. }
    \label{figure:details}
\end{figure}

There are other important architecture details for strong performance on D4RL. We ablate over capacity (number of resent blocks), layer norm, and dropout to see the effect in Figure~\ref{figure:details}. As shown in Section~\ref{section:issuesdiff}, having a larger capacity and well regularized network is crucial for strong performance. Dropout has a large benefit in reducing overfitting and layer norm has a small benefit in overall performance.

\subsection{Training Curves for Online Finetuning}
To compare the sample efficiency and effectiveness of our policy extraction method, we compare IDQL to IQL in antmaze-large environments in Figure~\ref{figure:large_fine}. The training curves show a performance gain and sample efficiency gain over IQL, indicating the strength of IDQL in finetuning.

\begin{figure}
    \centering
    \includegraphics[height=3.5cm]{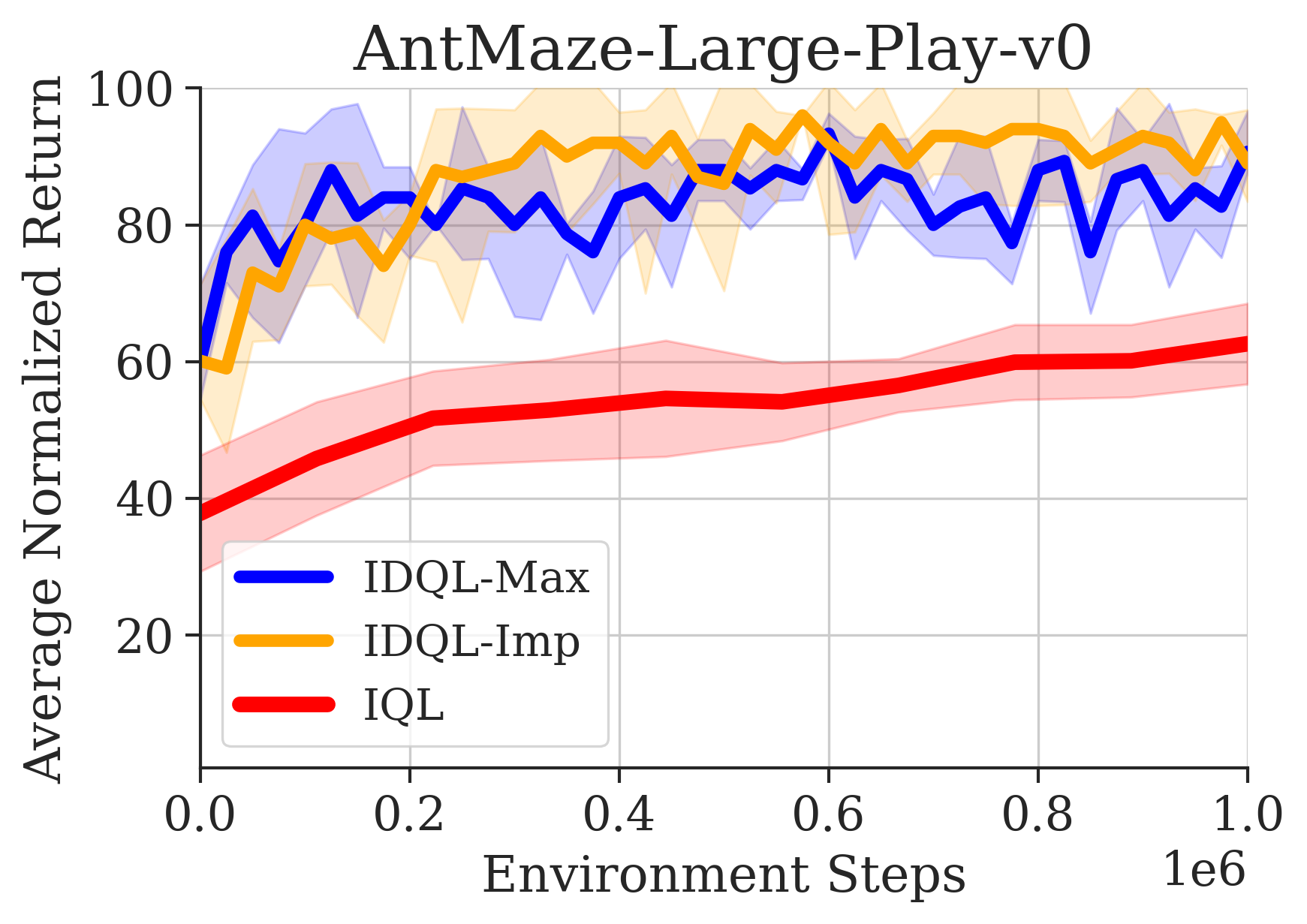} \includegraphics[height=3.5cm]{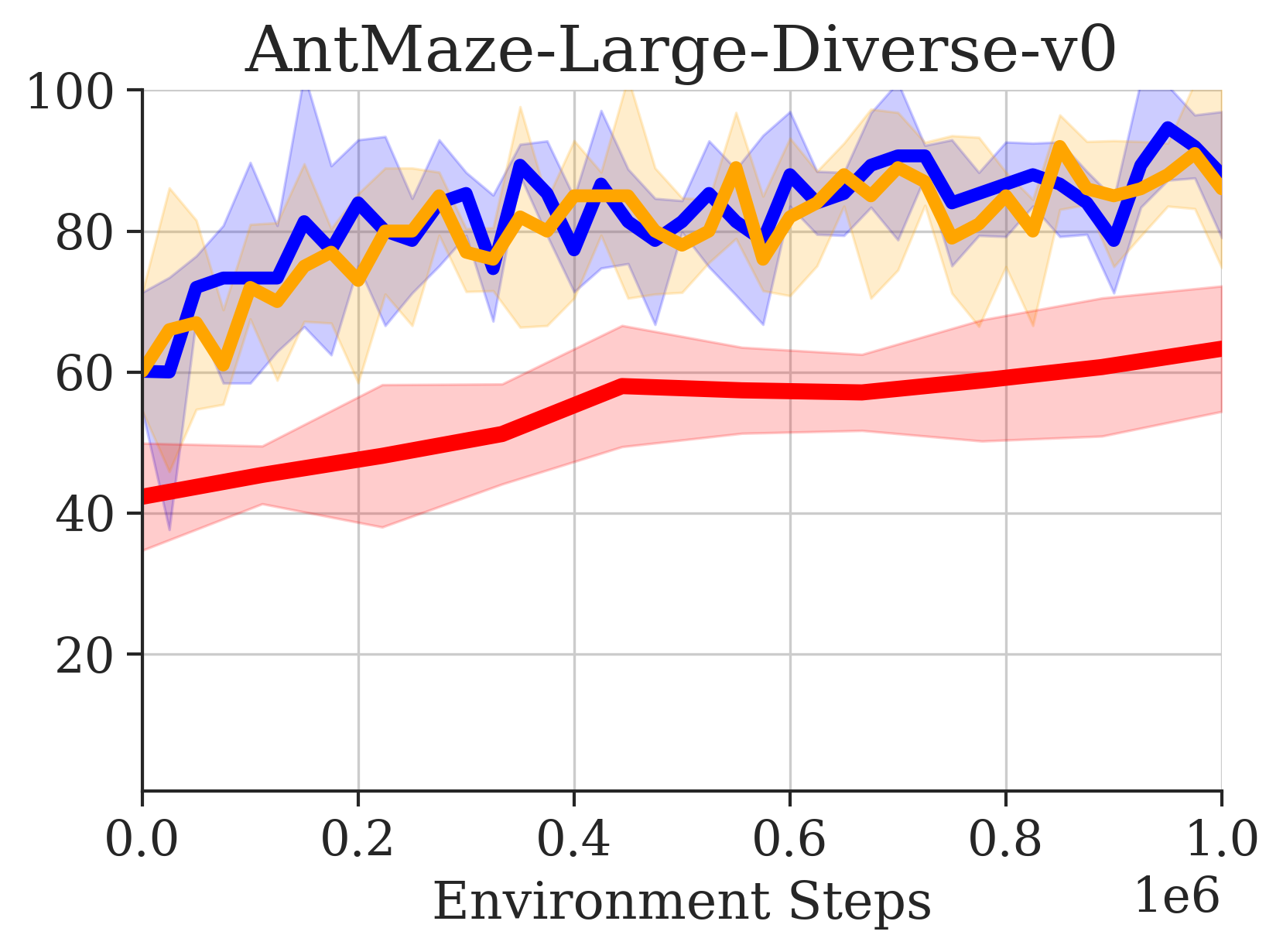}
    \caption{\textbf{Online finetuning results for antmaze-large tasks.} Finetuning training curves for various sampling and finetuning strategies for our method on antmaze-large tasks. The frozen actor requires only 100k samples to reach peak performance.}
    \label{figure:large_fine}
\end{figure}

\subsection{Full Learning Curves}
Figure~\ref{figure:locotrainingcurves} and Figure~\ref{figure:antmazetrainingcurves} are the learning curves for our one hyperparameter ("-1") variant of IDQL.

\begin{figure}
    \centering
    \includegraphics[height=3.3cm]{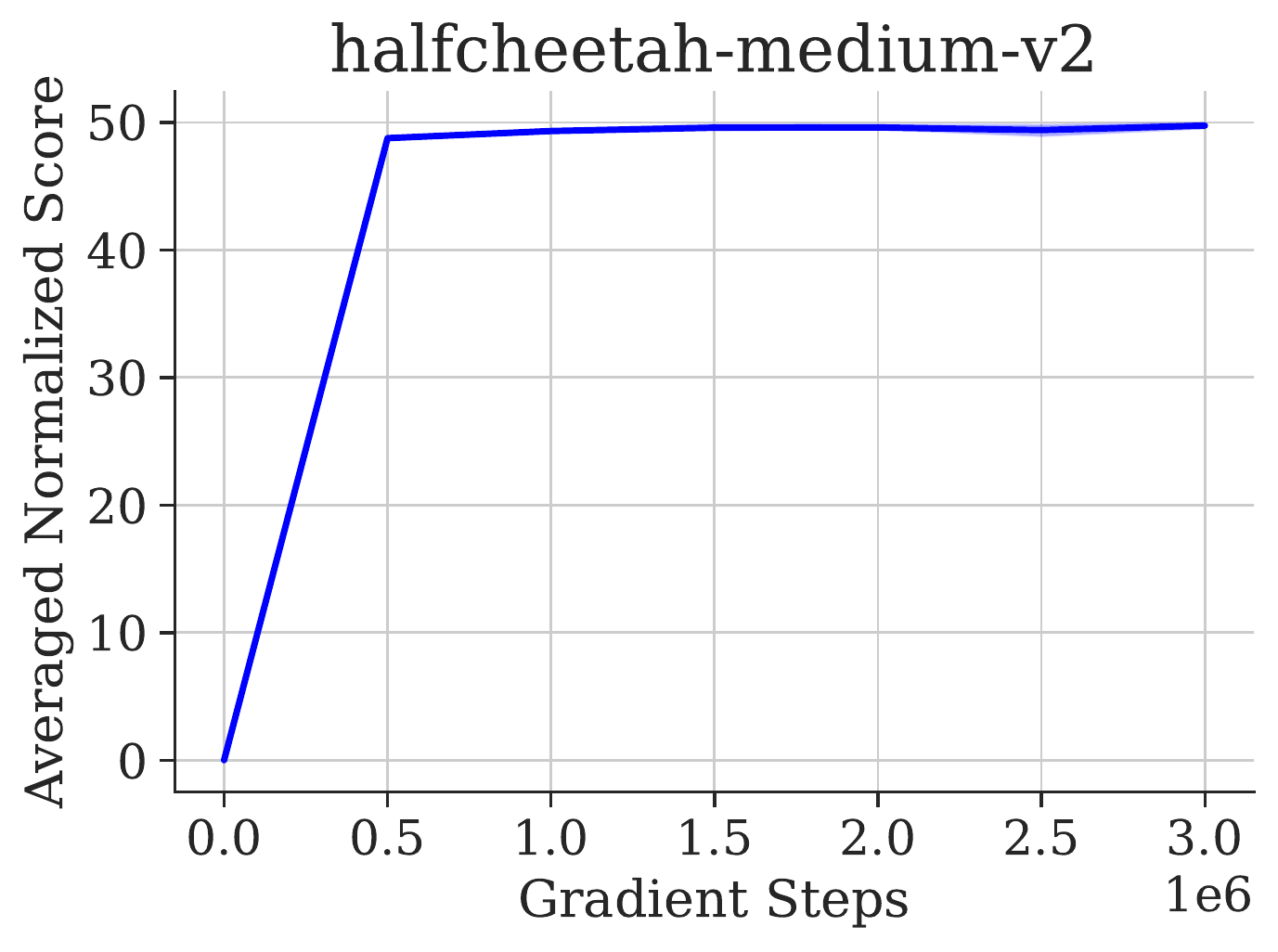} \includegraphics[height=3.3cm]{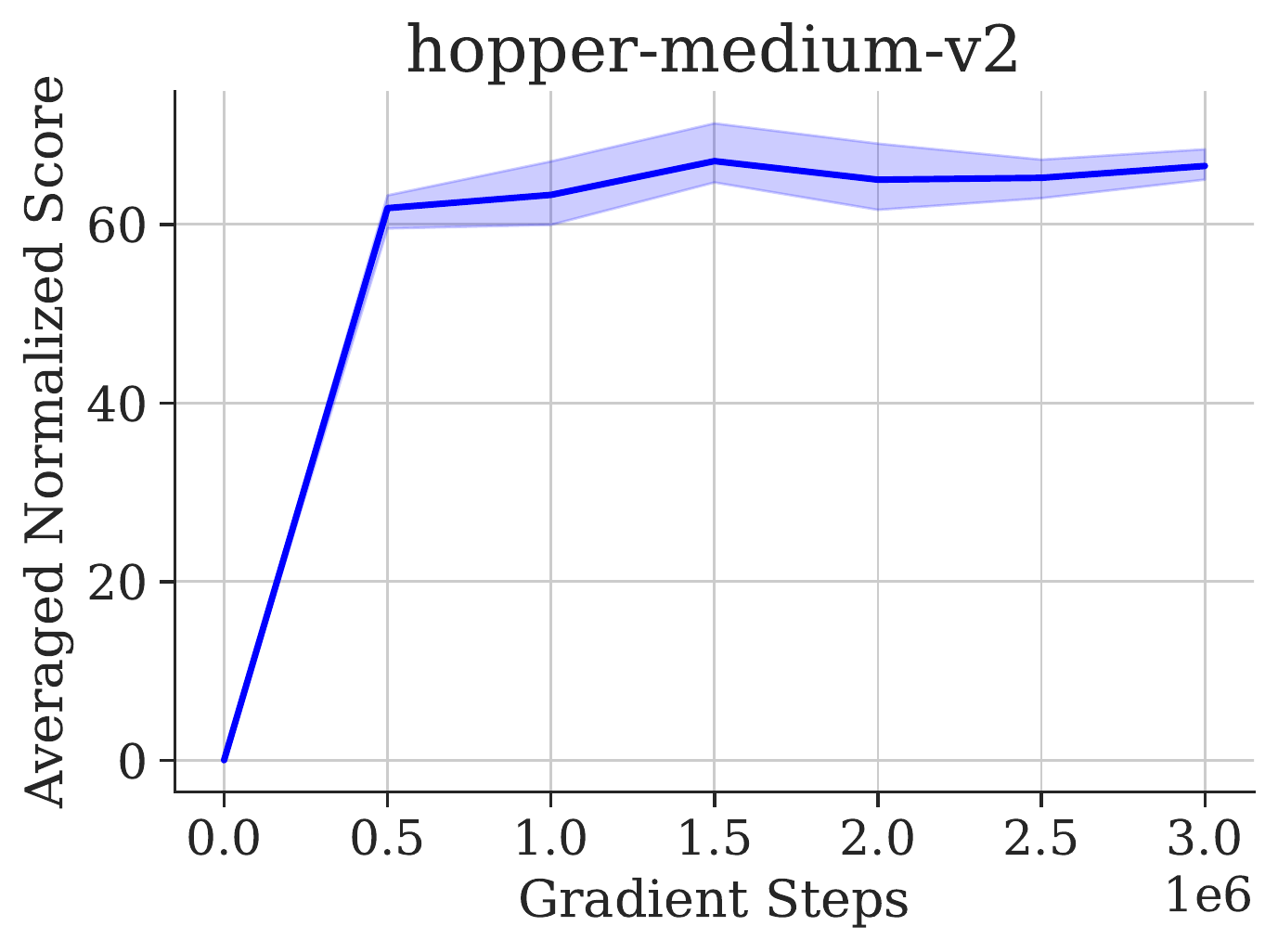}
    \includegraphics[height=3.3cm]{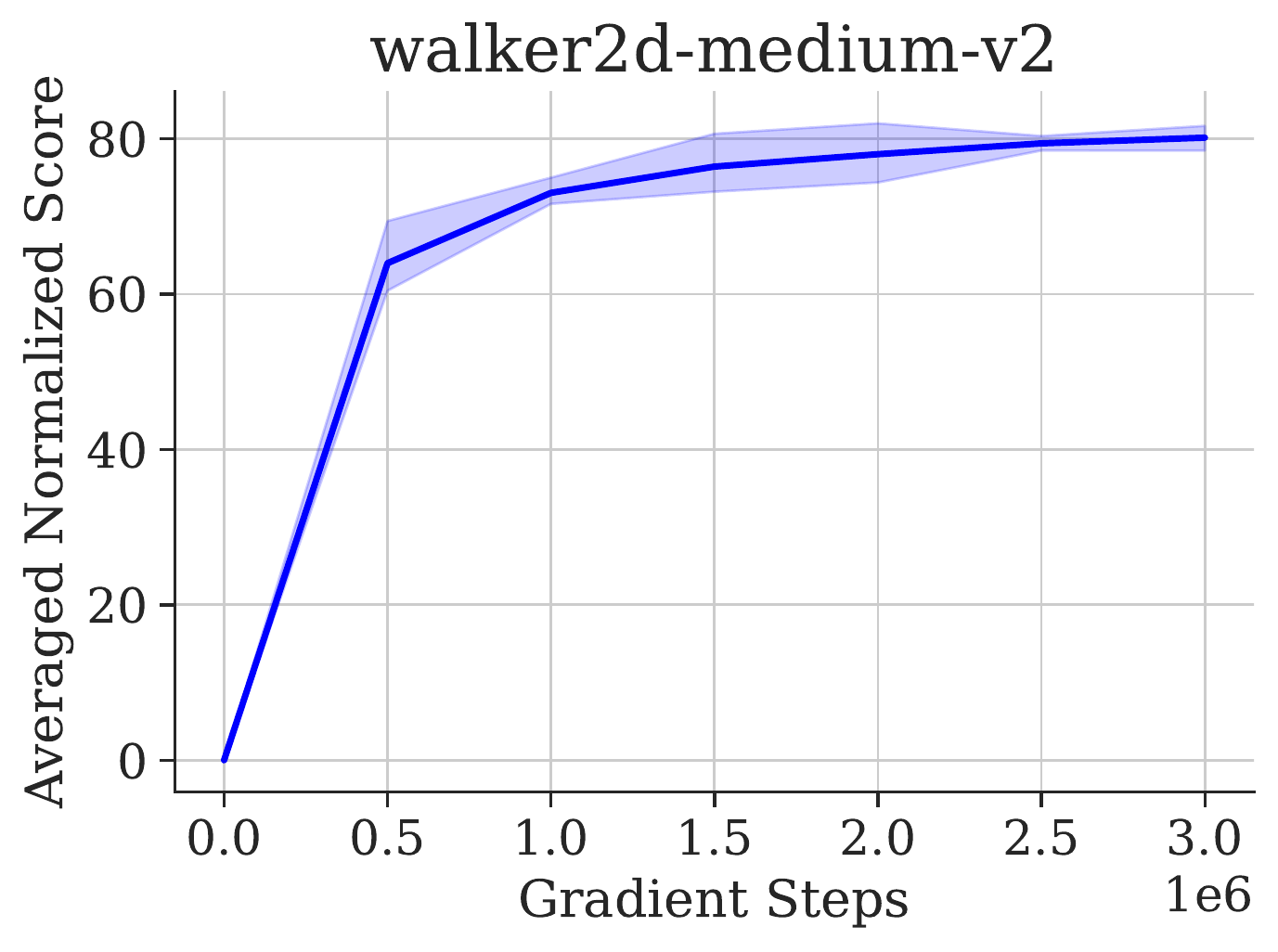} 

    \includegraphics[height=3.3cm]{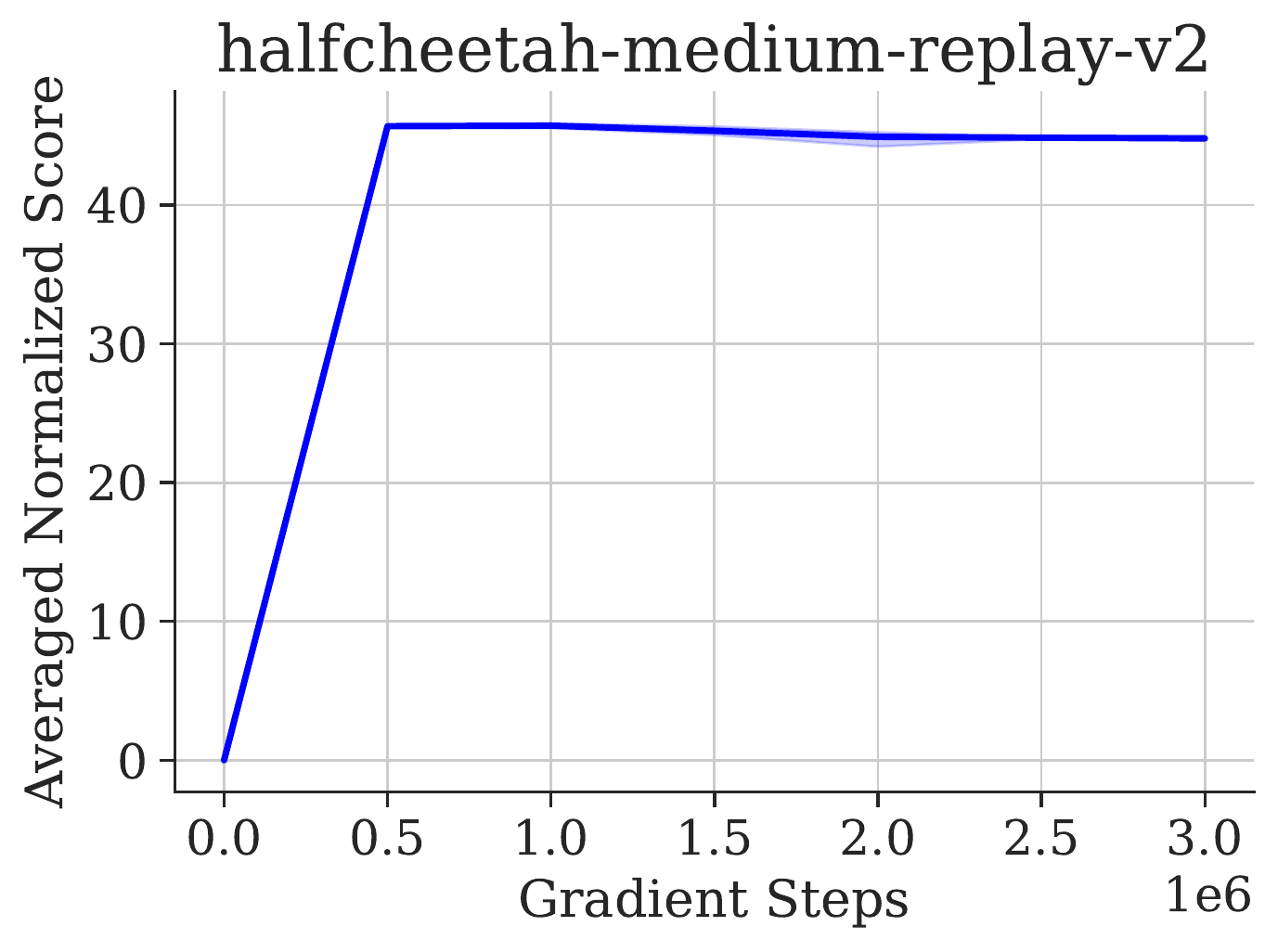} \includegraphics[height=3.3cm]{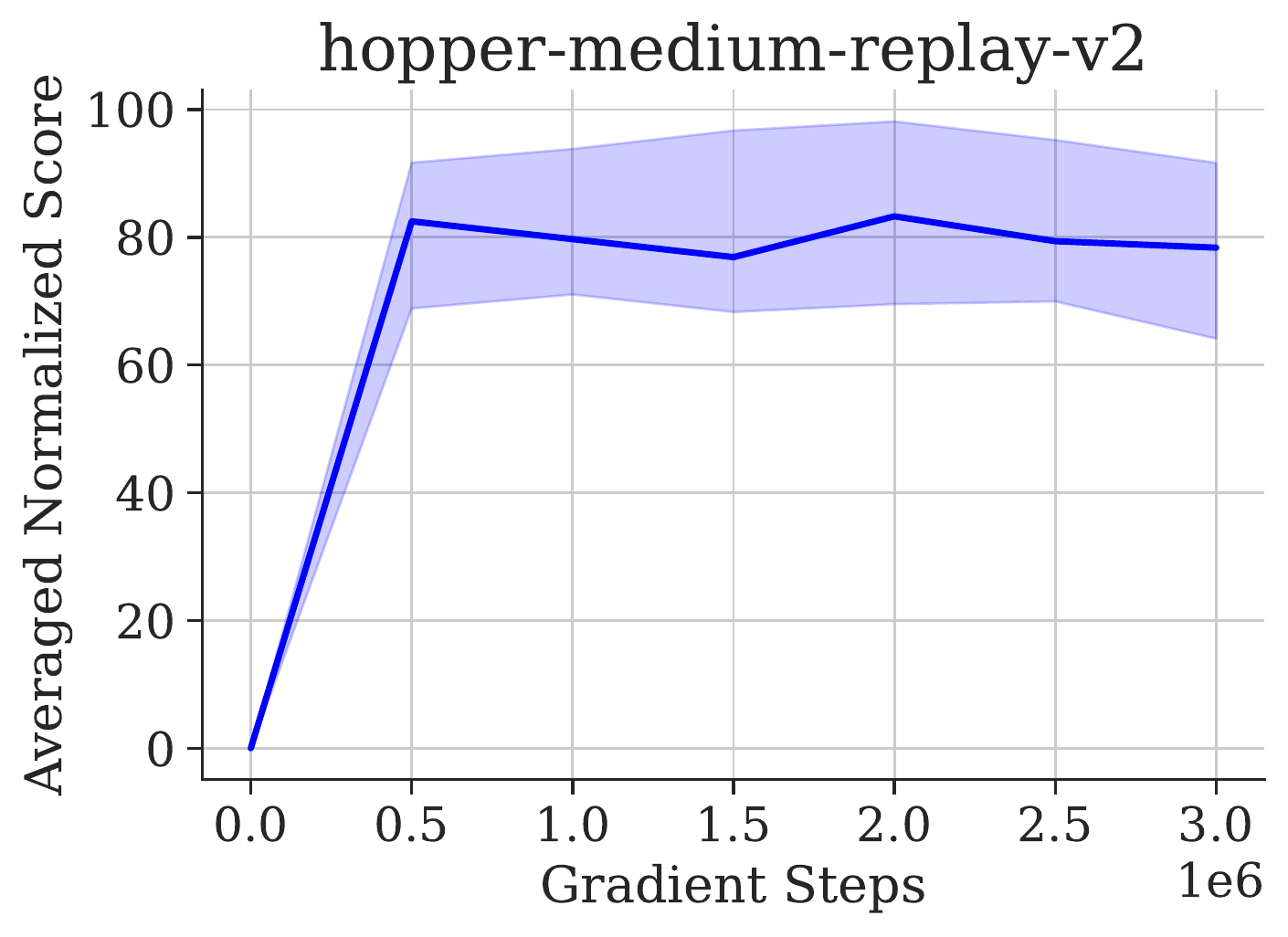}
    \includegraphics[height=3.3cm]{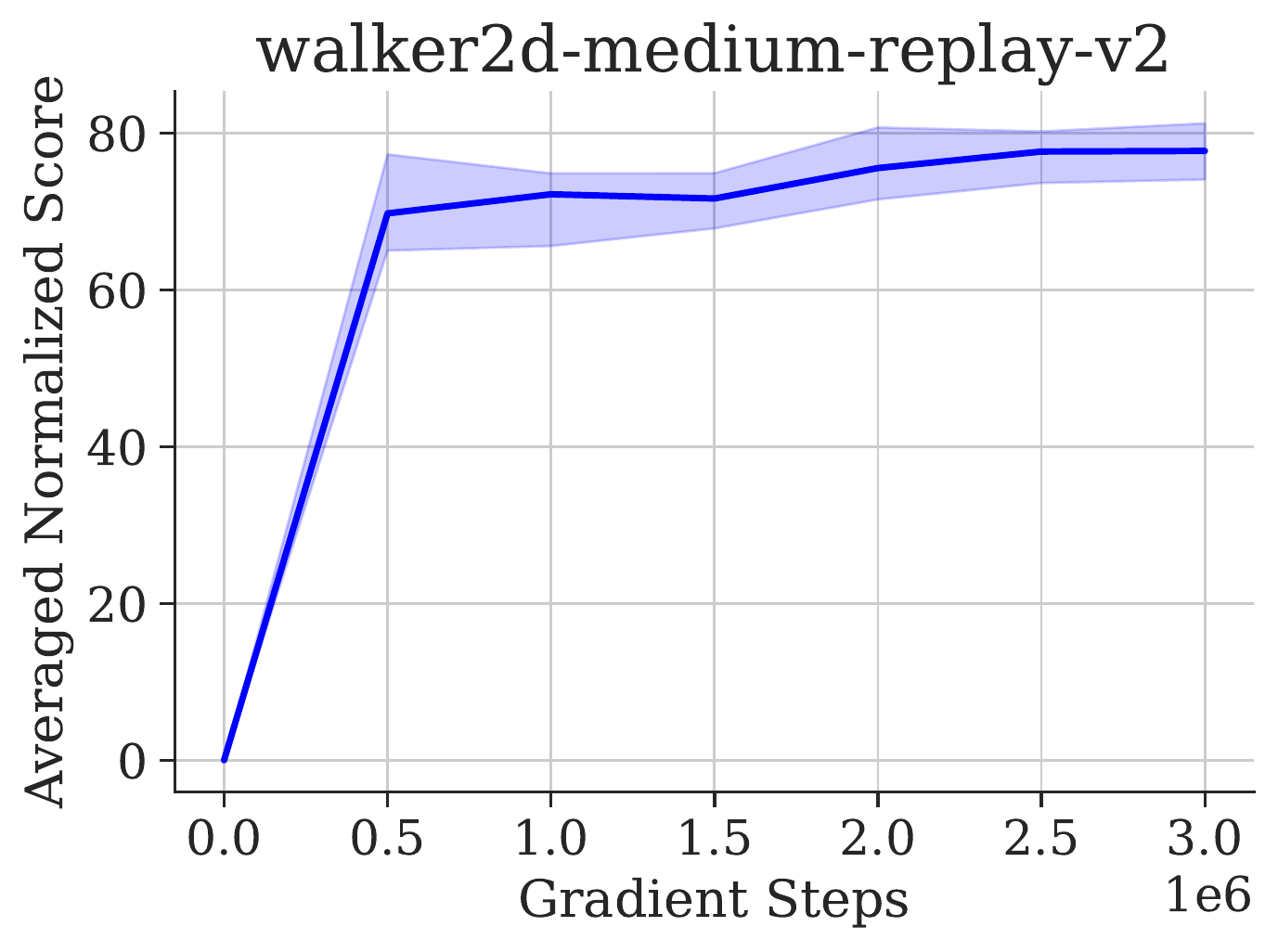} 

    \includegraphics[height=3.3cm]{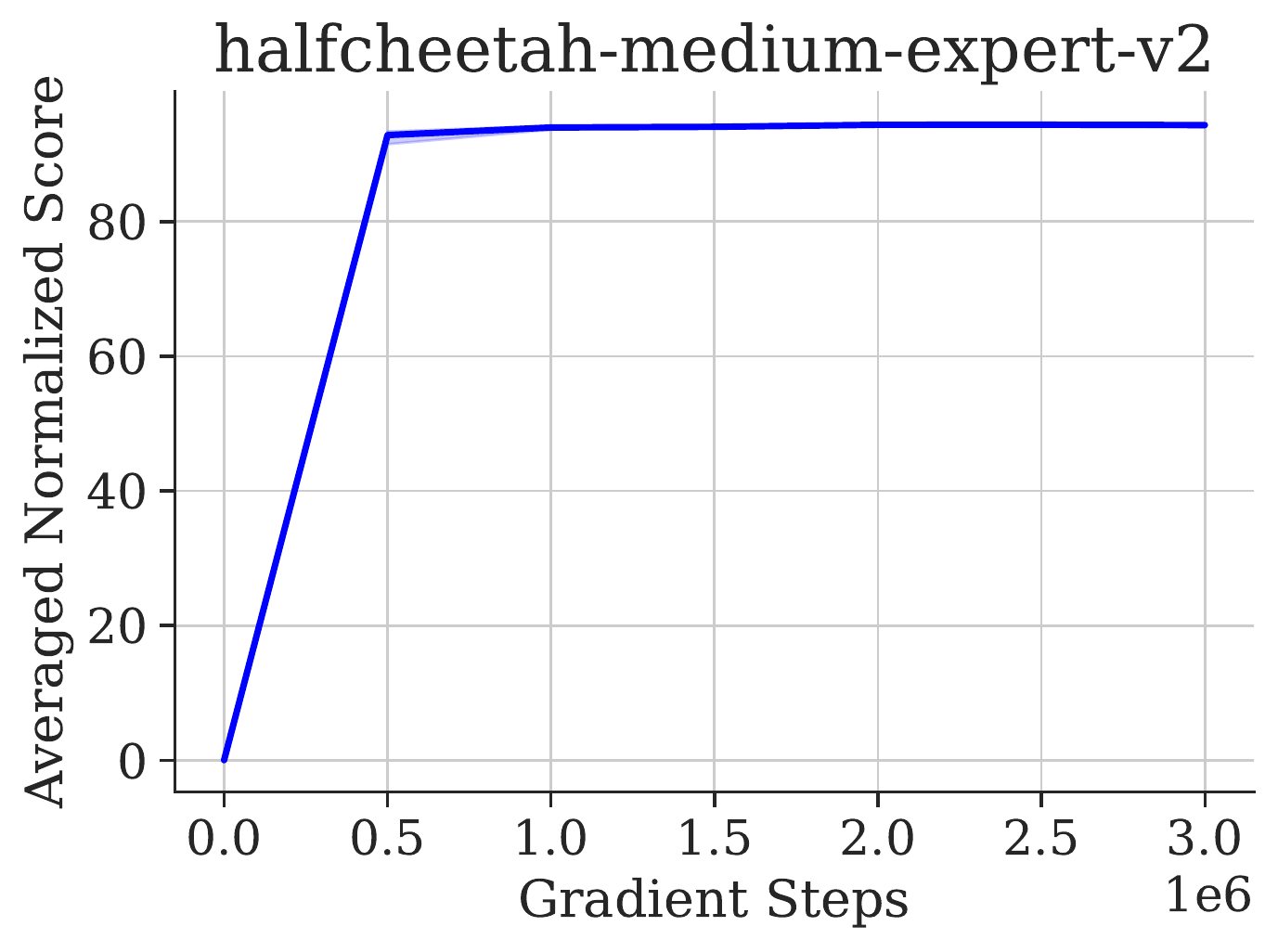} \includegraphics[height=3.3cm]{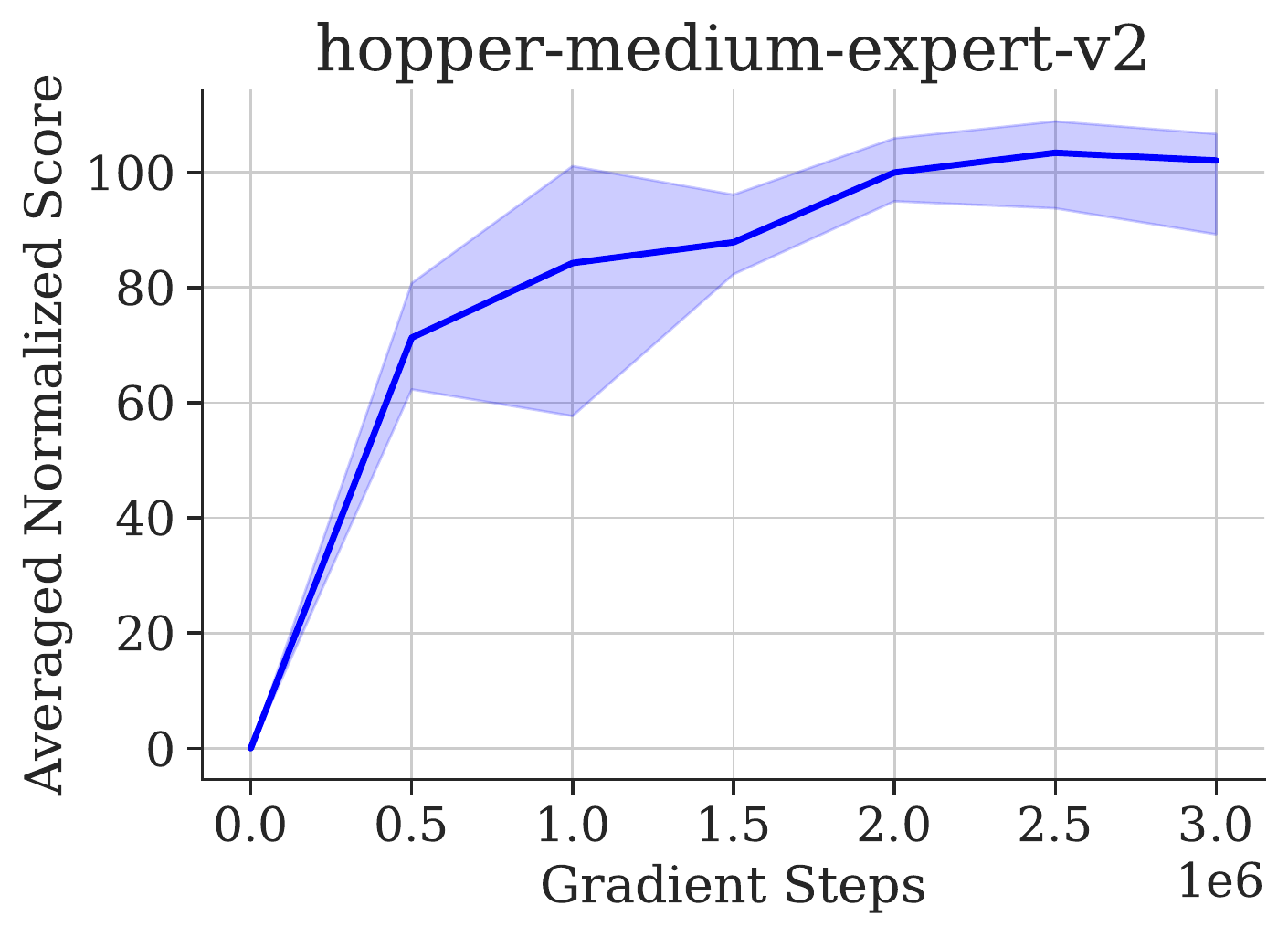}
    \includegraphics[height=3.3cm]{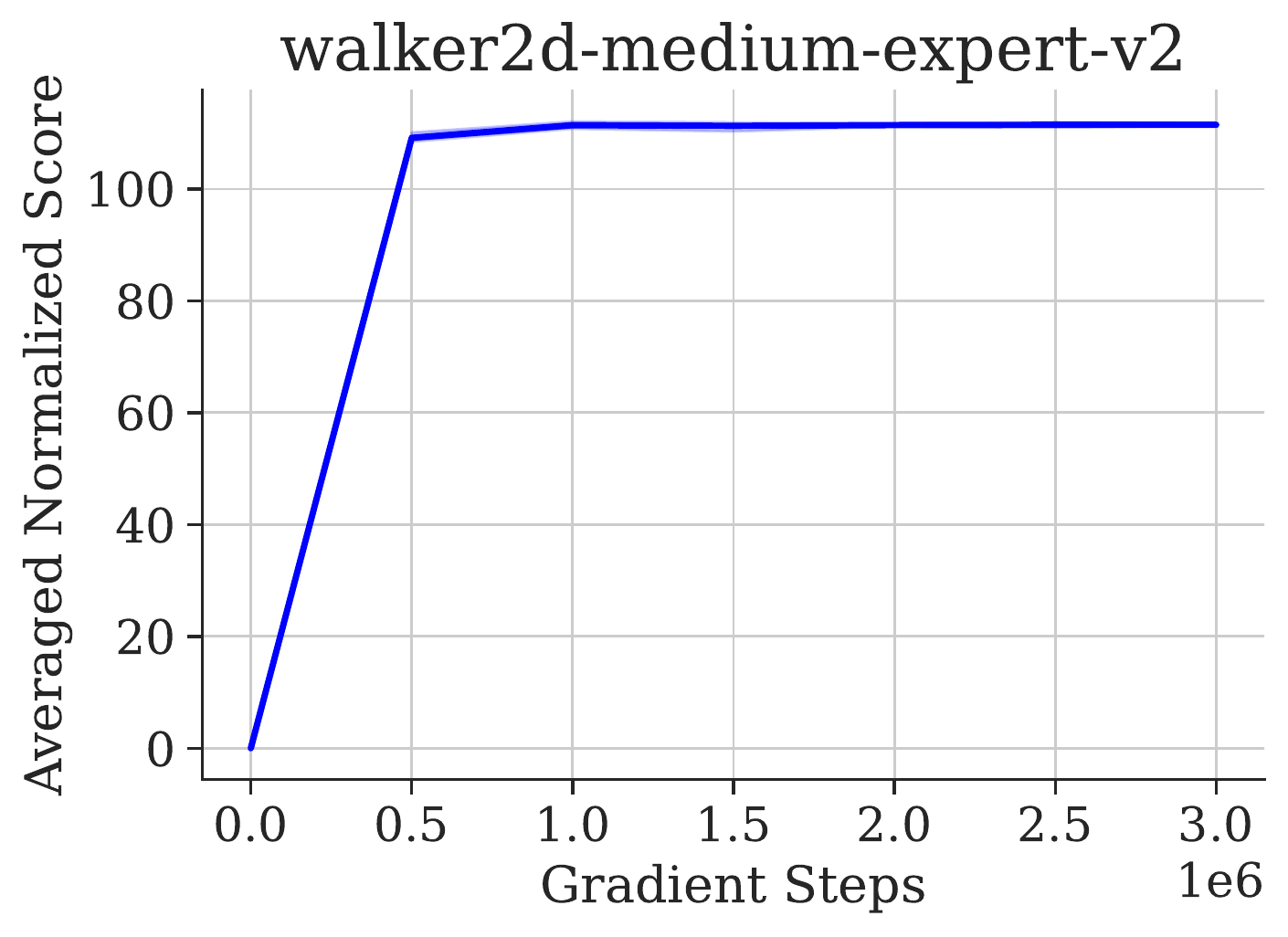}
    \caption{Training curves for locomotion tasks for one-hyperparameter variant of IDQL.}
    \label{figure:locotrainingcurves}
\end{figure}

\begin{figure}
    \centering
    \includegraphics[height=3.3cm]{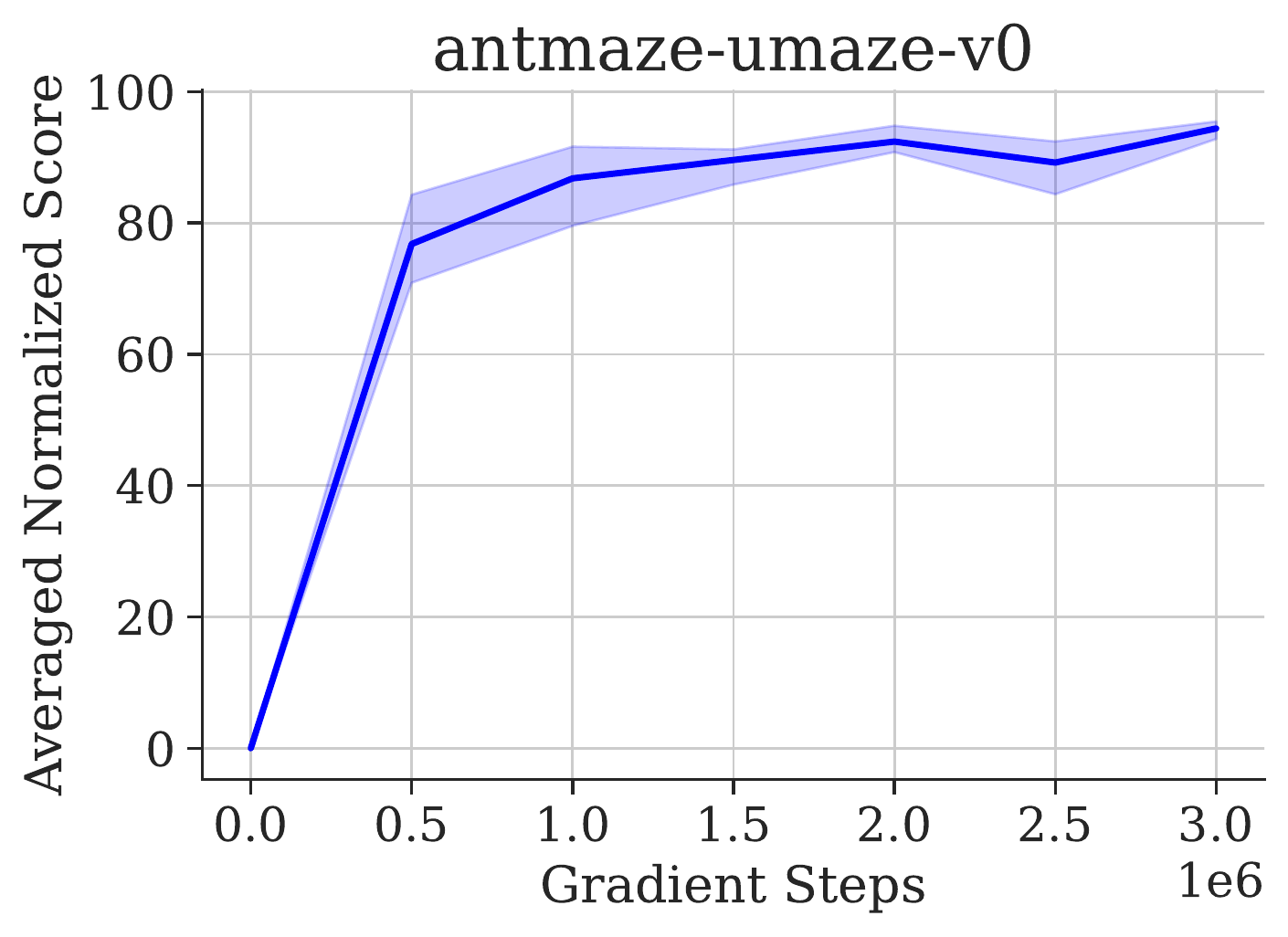} \includegraphics[height=3.3cm]{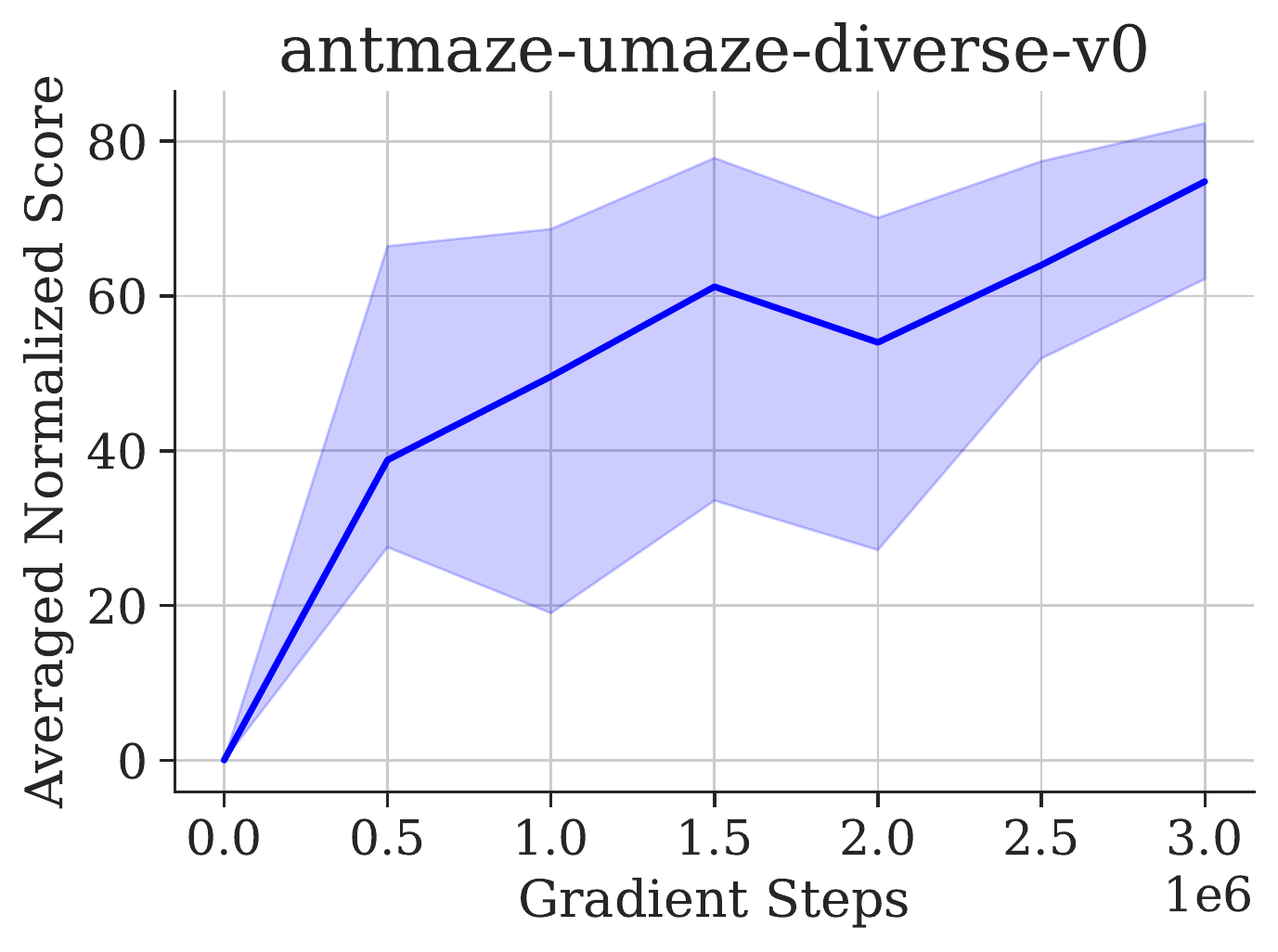}
    \includegraphics[height=3.3cm]{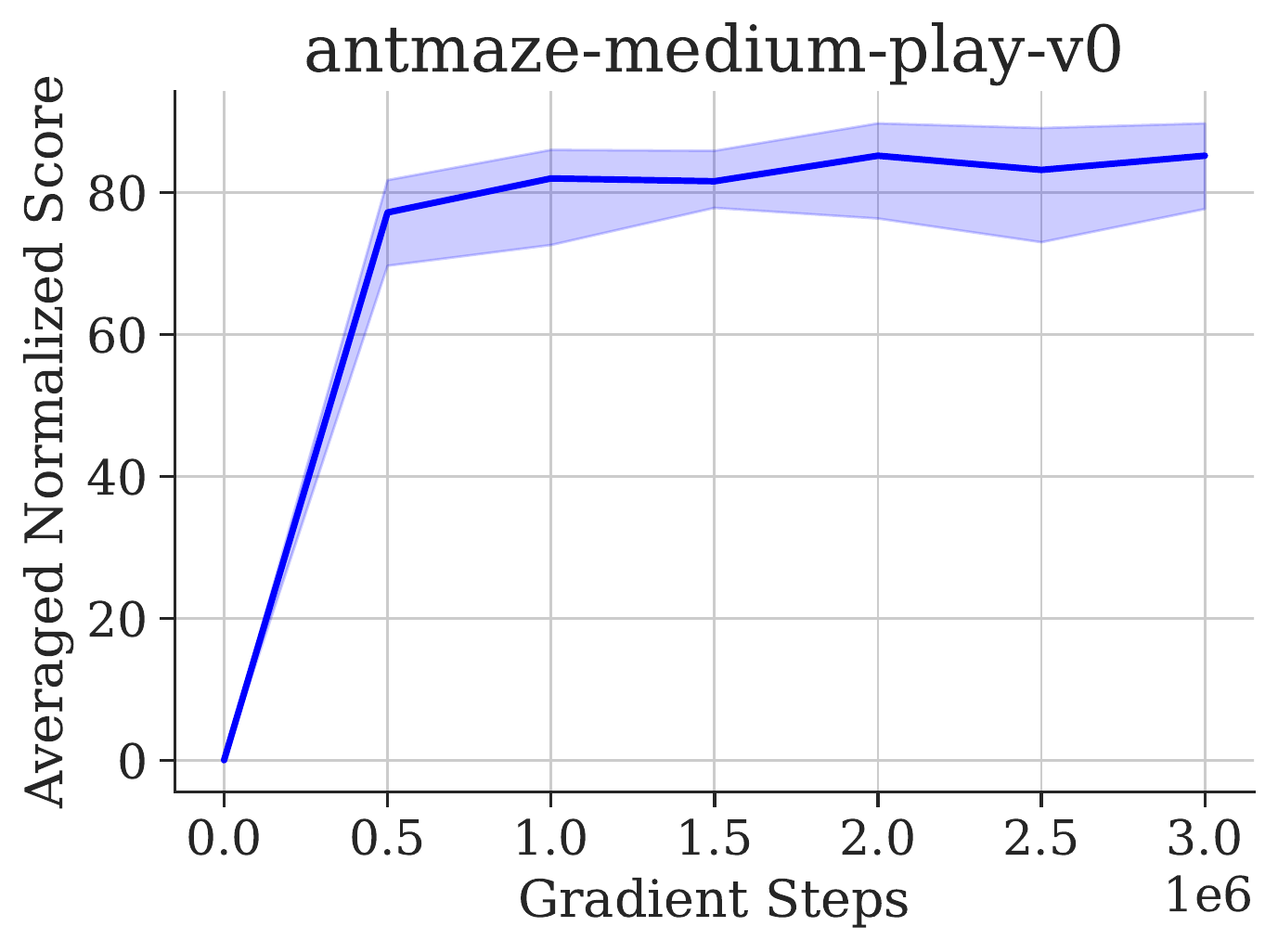} 

    \includegraphics[height=3.3cm]{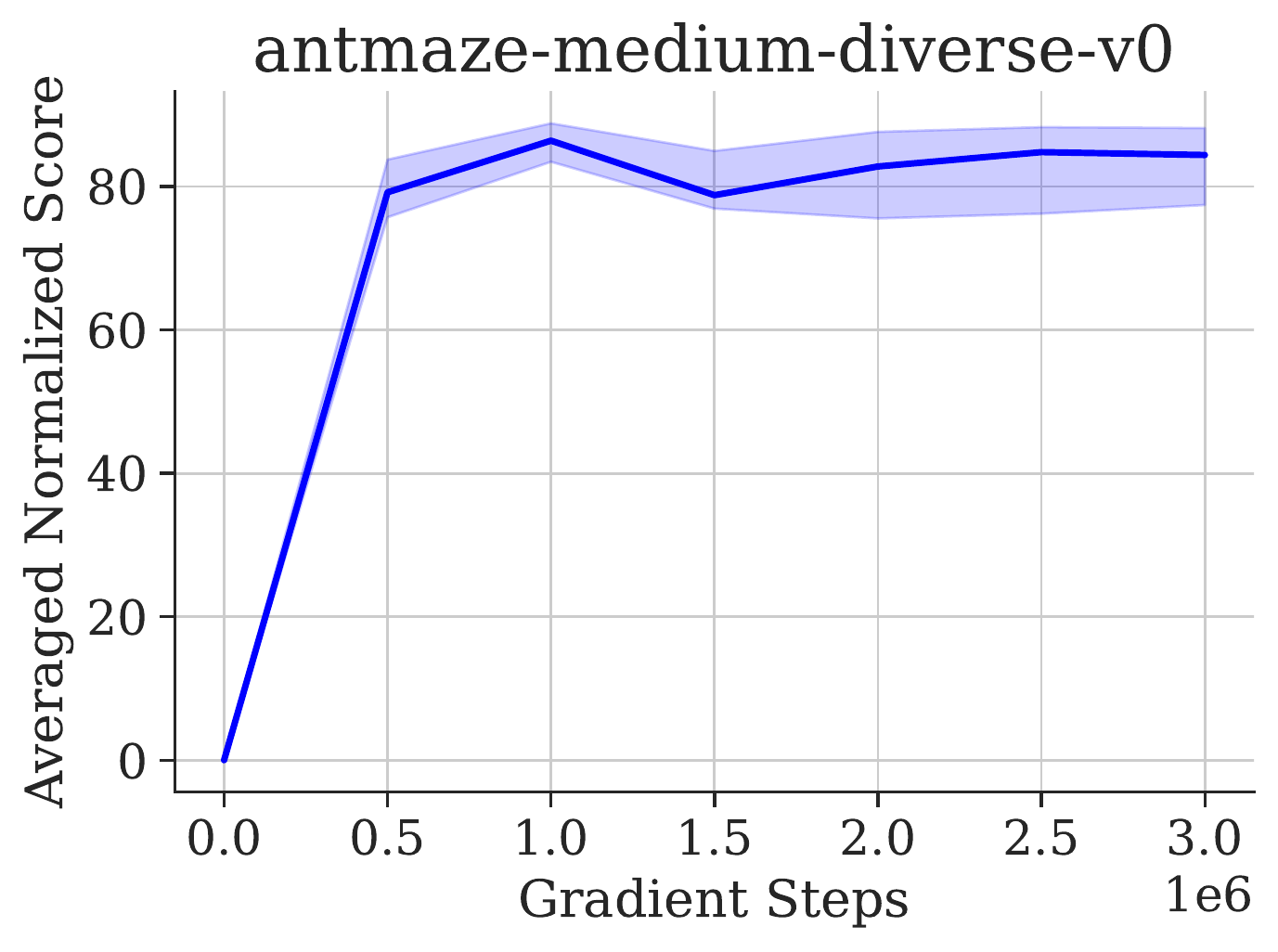} \includegraphics[height=3.3cm]{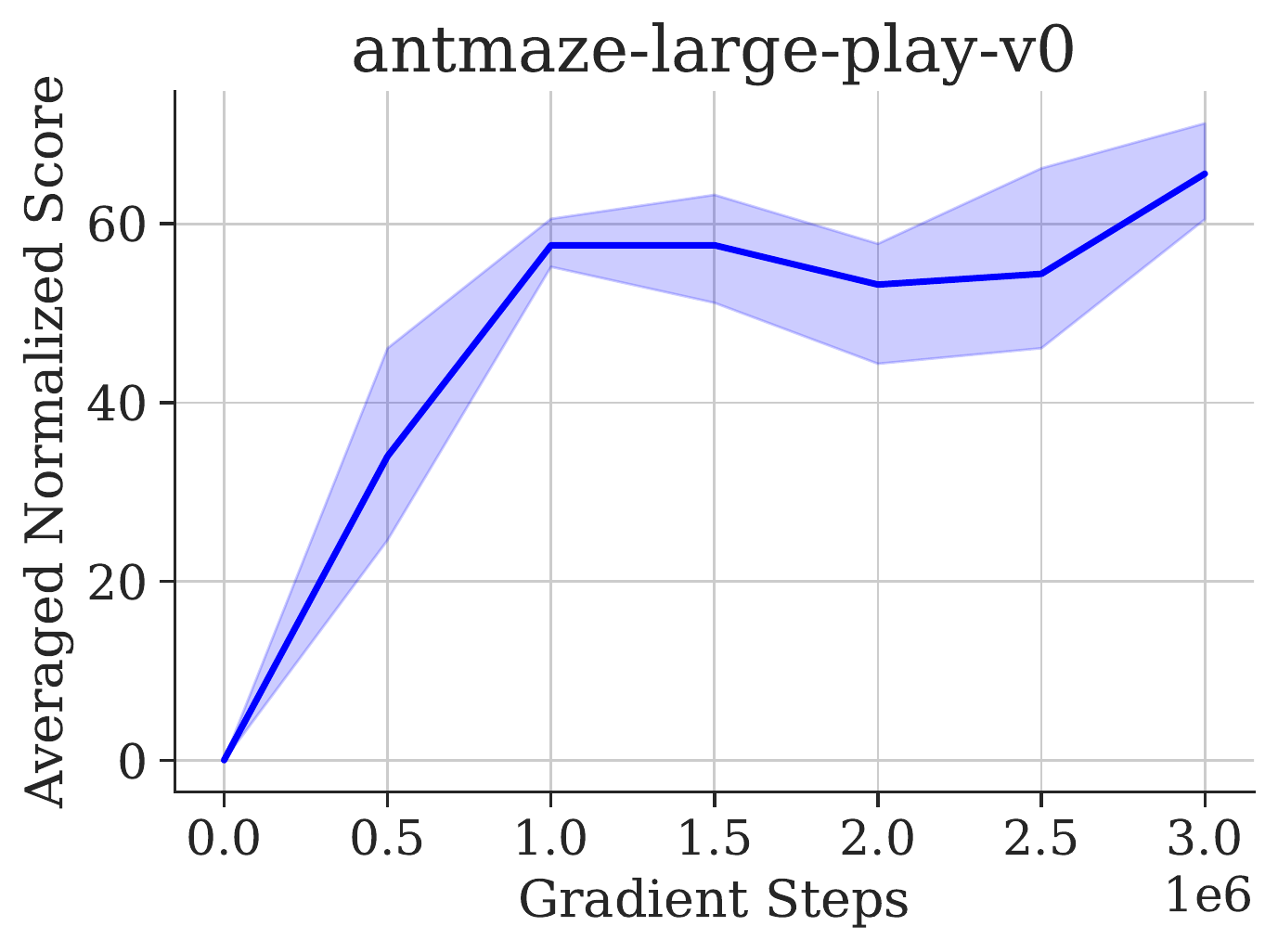}
    \includegraphics[height=3.3cm]{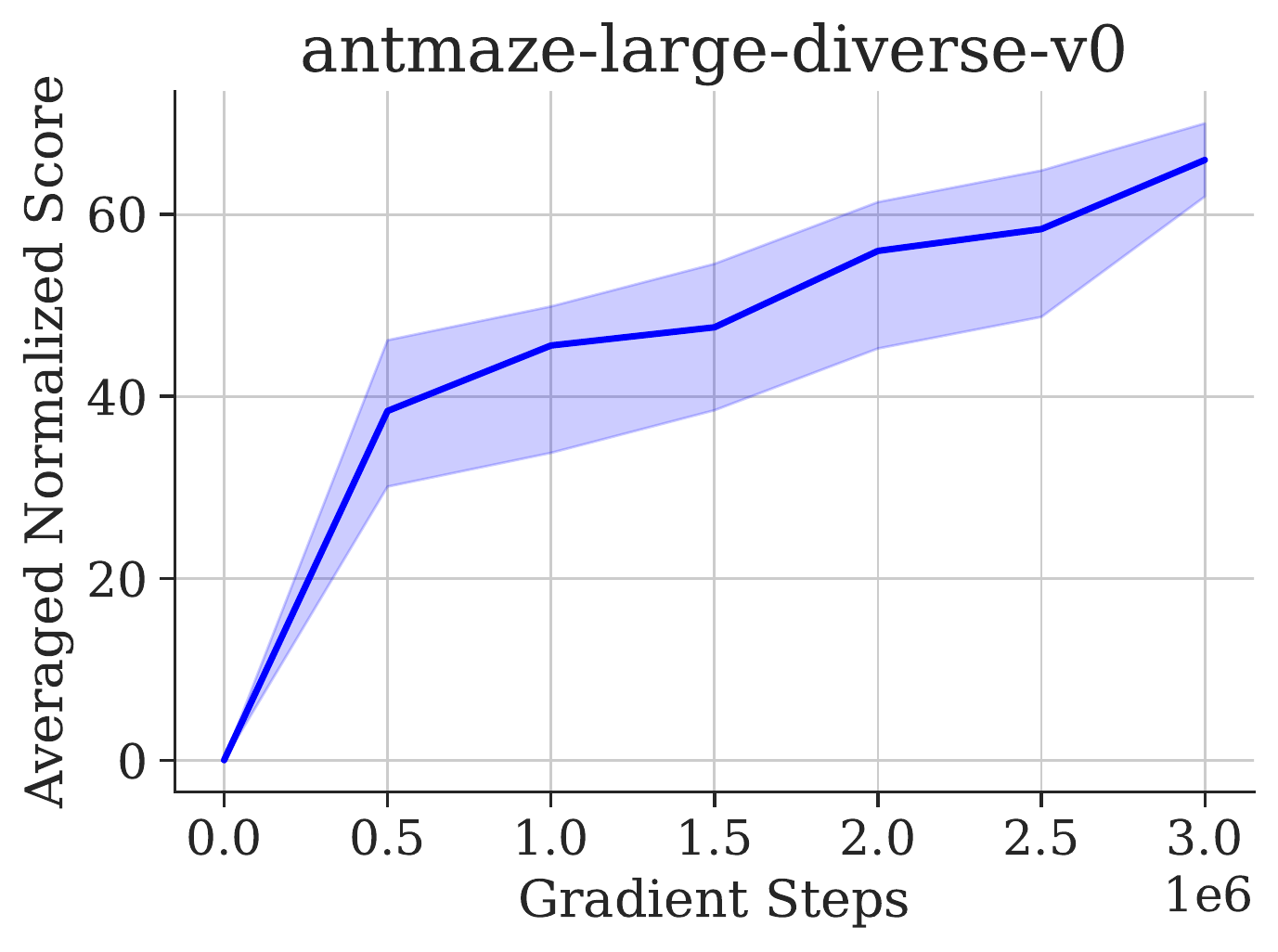} 
    
    \caption{Training curves for antmaze tasks for one-hyperparameter variant of IDQL.}
    \label{figure:antmazetrainingcurves}
\end{figure}

\newpage
\section{LN\_Resnet Architecture}
\label{archdepict}
We depict the architecture used in the LN\_Resnet described in Section~\ref{section:issuesdiff}. We use hidden dim size $256$ and $n = 3$ blocks for our practical implementation.

\usetikzlibrary{shapes, positioning, arrows.meta, decorations.pathreplacing, calc}

\begin{tikzpicture}[
    node distance=1cm and 0.75cm,
    font=\footnotesize,
    block/.style={rectangle, draw, rounded corners, text centered, minimum height=1cm, node distance=1cm and 0.5cm},
    arrow/.style={-Stealth, thick}
]

    \node[block] (input) {Input};
    \node[block, below=of input] (dense1) {Dense (hidden\_dim)};
    \node[block, below=of dense1, dashed] (mlpblockdots) {$n$ \ (MLPResnet Blocks)};
    \node[block, below=of mlpblockdots] (activation) {Activation};
    \node[block, below=of activation] (dense2) {Dense (action\_dim)};
    \node[block, below=of dense2] (output) {Output};

    \draw[arrow] (input) -- (dense1);
    \draw[arrow] (dense1) -- (mlpblockdots);
    \draw[arrow] (mlpblockdots) -- (activation);
    \draw[arrow] (activation) -- (dense2);
    \draw[arrow] (dense2) -- (output);

    \node[block, right=3.5cm of dense1] (input1) {Input};
    \node[block, below=of input1] (dropout) {Dropout};
    \node[block, below=of dropout] (layernorm) {LayerNorm};
    \node[block, below=of layernorm] (dense3) {Dense (hidden\_dim * 4)};
    \node[block, below=of dense3] (act) {Activation};
    \node[block, below=of act] (dense4) {Dense (hidden\_dim)};
    \node[block, below=of dense4] (add) {Add};
    \node[block, right=1cm of add] (output1) {Output};

    \draw[arrow] (input1) -- (dropout);
    \draw[arrow] (dropout) -- (layernorm);
    \draw[arrow] (layernorm) -- (dense3);
    \draw[arrow] (dense3) -- (act);
    \draw[arrow] (act) -- (dense4);
    \draw[arrow] (dense4) -- (add);
    \draw[arrow] (add) -- (output1);
    \draw[arrow] (input1) to[out=-140, in=140] (add);

    \node[above=0.25cm of input1] (mlpblocklabel) {MLPResNet Block};
\end{tikzpicture}

\section{Extra Related Work}
\paragraph{Other Offline RL Methods.}
\citet{brandfonbrener2021offline} use a SARSA critic objective (equivalent to expectile $\tau = 0.5$) for critic learning and then a greedy extraction via AWR.

\paragraph{Measuring Statistic for RL.}
In the family of algorithms, we present quantiles as a potential statistic to measure that induces an implicit policy distribution. Quantile statistics have also been used in RL prior for estimating distributions \citep{dabney2018distributional, kuznetsov2020controlling} over Q-functions. Though, just like IQL, we avoid querying the Q-function on out of distribution actions.

\end{document}